\documentclass[10pt,twocolumn,letterpaper]{article}

\usepackage[pagenumbers]{cvpr} % To force page numbers, e.g. for an arXiv version

\usepackage{bm}
\usepackage{graphicx}
\usepackage{amsmath}
\usepackage{amssymb}
\usepackage{mathtools}
\usepackage{amsthm}
\usepackage{booktabs}
\usepackage{caption}
\usepackage{subcaption}
\usepackage{multicol}
\usepackage{stfloats}
\usepackage{algorithm}
\usepackage{makecell}
\usepackage{multirow}
\usepackage{pifont}
\usepackage[table]{xcolor}
\usepackage{algpseudocode}

\newtheorem{theorem}{Theorem}

\newtheorem{lemma}[theorem]{Lemma}

\newtheorem{assumption}[theorem]{Assumption}

\definecolor{cvprblue}{rgb}{0.21,0.49,0.74}
\usepackage[pagebackref,breaklinks,colorlinks,allcolors=cvprblue]{hyperref}

\def\paperID{} % *** Enter the Paper ID here
\def\confName{CVPR}
\def\confYear{2026}

\title{The Power of Decaying Steps: Enhancing Attack Stability and Transferability for Sign-based Optimizers}

\author{Wei Tao\textsuperscript{\rm 1, \rm2, $\dagger$},  Yang Dai\textsuperscript{\rm 1, $\dagger$},  Jincai Huang\textsuperscript{\rm 1}, 
Qing Tao\textsuperscript{\rm 3,*}\\
\textsuperscript{\rm 1} Laboratory for Big Data and Decision, National University of Defense and Technology, China\\
\textsuperscript{\rm 2} Academy of Military Science, China \ \ \
\textsuperscript{\rm 3} Hefei Institute of Technology, China \\
{\tt\small \{taowei, daiyang2000, huangjincai\}@nudt.edu.cn, taoqing@gmail.com}
}

\begin{document}
\maketitle

\def\thefootnote{$\dagger$\ }
\begin{NoHyper}
\footnotetext{Equal contribution, \quad *\ {Corresponding author}}
\end{NoHyper}
\def\thefootnote{\arabic{footnote}}

\begin{abstract}
Crafting adversarial examples can be formulated as an optimization problem. While sign-based optimizers such as I-FGSM and MI-FGSM have become the de facto standard for the induced optimization problems, there still exist several unsolved problems in theoretical grounding and practical reliability especially in non-convergence and instability, which inevitably influences their transferability. Contrary to the expectation, we observe that the attack success rate may degrade sharply when more number of iterations are conducted. In this paper, we address these issues from an optimization perspective. By reformulating the sign-based optimizer as a specific coordinate-wise gradient descent, we argue that one cause for non-convergence and instability is their non-decaying step-size scheduling. Based upon this viewpoint, we propose a series of new attack algorithms that enforce Monotonically Decreasing Coordinate-wise Step-sizes (MDCS) within sign-based optimizers. Typically, we further provide theoretical guarantees proving that MDCS-MI attains an optimal convergence rate of $O(1/\sqrt{T})$, where $T$ is the number of iterations. Extensive experiments on image classification and cross-modal retrieval tasks demonstrate that our approach not only significantly improves transferability but also enhances attack stability compared to state-of-the-art sign-based methods. Code is available at \url{https://github.com/AndssY/MDCS_attack}.

\end{abstract}    
\section{Introduction}
\label{sec:intro}

Crafting adversarial examples (AEs) for a given learning model can be cast as a constrained local optimization problem \cite{Goodfellow2015ExplainingAH}. The defining objective is to seek an imperceptible perturbation to the input data that induces a maximal loss in the model's output. In the landscape of attack methodologies, sign-based gradient optimizers have long been the dominant paradigm. This family of algorithms includes the foundational single-step FGSM \cite{Goodfellow2015ExplainingAH}, its multi-step refinements I-FGSM (BIM) \cite{Kurakin2017AdversarialEI} and PGD \cite{Madry2018TowardsDL}, as well as the momentum-enhanced MI-FGSM \cite{Dong2018BoostingAA}.

FGSM and I-FGSM are directly developed from the well-known gradient descent principle, in which the sign of the gradient vector is used as its update direction. Similar to sign-gradient, MI-FGSM \cite{Dong2018BoostingAA} uses the sign of a variant of Polyak's heavy-ball (HB) momentum \cite{Polyak1964SomeMO} as its iterative direction. By accumulating the past gradients in momentum, MI-FGSM can stabilize update directions and then remarkably boost the transferability of AEs.

MI-FGSM has won the first place in NIPS 2017 Non-targeted Adversarial Attack and Targeted Adversarial Attack competitions \cite{Dong2018BoostingAA}. To further enhance transferability, various sign-based momentum attacks have been proposed by incorporating insight from optimization theory. Typical examples include VMI-FGSM \cite{Wang2021BoostingTT}, GRA \cite{Zhu2024GRA}, PGN \cite{Ge2023Boosting}, MEF \cite{Qiu2024Enhancing}, and MUMIDIG \cite{ren2025mumodig}. Concurrently, recent studies have exposed that Vision-Language Models (VLMs) remain vulnerable to AEs \cite{zhang2022towards, cui2024robustness}. A key distinction in this multimodal setting is the necessity for collaborative perturbation across modalities, rather than independent attacks. Pioneering this direction, Co-Attack \cite{zhang2022towards} collectively generate adversarial perturbations for both image and text modalities. Subsequent methods like SGA, DRA, and SA-AET \cite{gao2024dra,lu2023sga,jia2025semantic} have further pushed the state-of-the-art by augmenting image-text pairs. Notably, despite the progressive complexity in their multimodal attack strategies, these methods share a common algorithmic backbone, i.e., they fundamentally rely on sign-based optimizers as the core engine for perturbation generation.

Despite the demonstrated transferability of sign-based optimizers, there still exist several unsolved problems in theoretical grounding and practical reliability. Theoretically, the convergence guarantees of these methods are fundamentally flawed. As established by Karimireddy et al. \cite{Karimireddy2019ErrorFF}, even in simple convex settings which are a canonical test bed for optimization, sign-based gradients provably fail to converge to optimum. This divergence is intrinsic to the sign operator. It only acts as an extreme form of compression, discarding crucial information about gradient magnitudes and distorting the true descent direction. 

The theoretical fragility manifests in practice as a counter-intuitive operational instability. Contrary to the expectation that conducting more iterations would enhance an attack, we observe an obvious degradation in success rates under prolonged optimization. As quantified in Fig. \ref{fig:instable ASR}, the performance of I-FGSM and MI-FGSM deteriorates when more number of iterations are conducted. Notably, the success rate of I-FGSM plummets from a peak of 37.1\% $(t=2)$ to a trough of 15.5\% $(t=20)$. This performance collapse indicates that the sign-based optimizers are prone to overshooting or oscillating around the optimal attack rate, thereby highlighting a critical limitation in their operational stability and reliability.

\begin{figure}[htbp]
\setlength{\abovecaptionskip}{-0.05cm}
\setlength{\belowcaptionskip}{-0.4cm}
\centering
\includegraphics[width=0.41\textwidth]{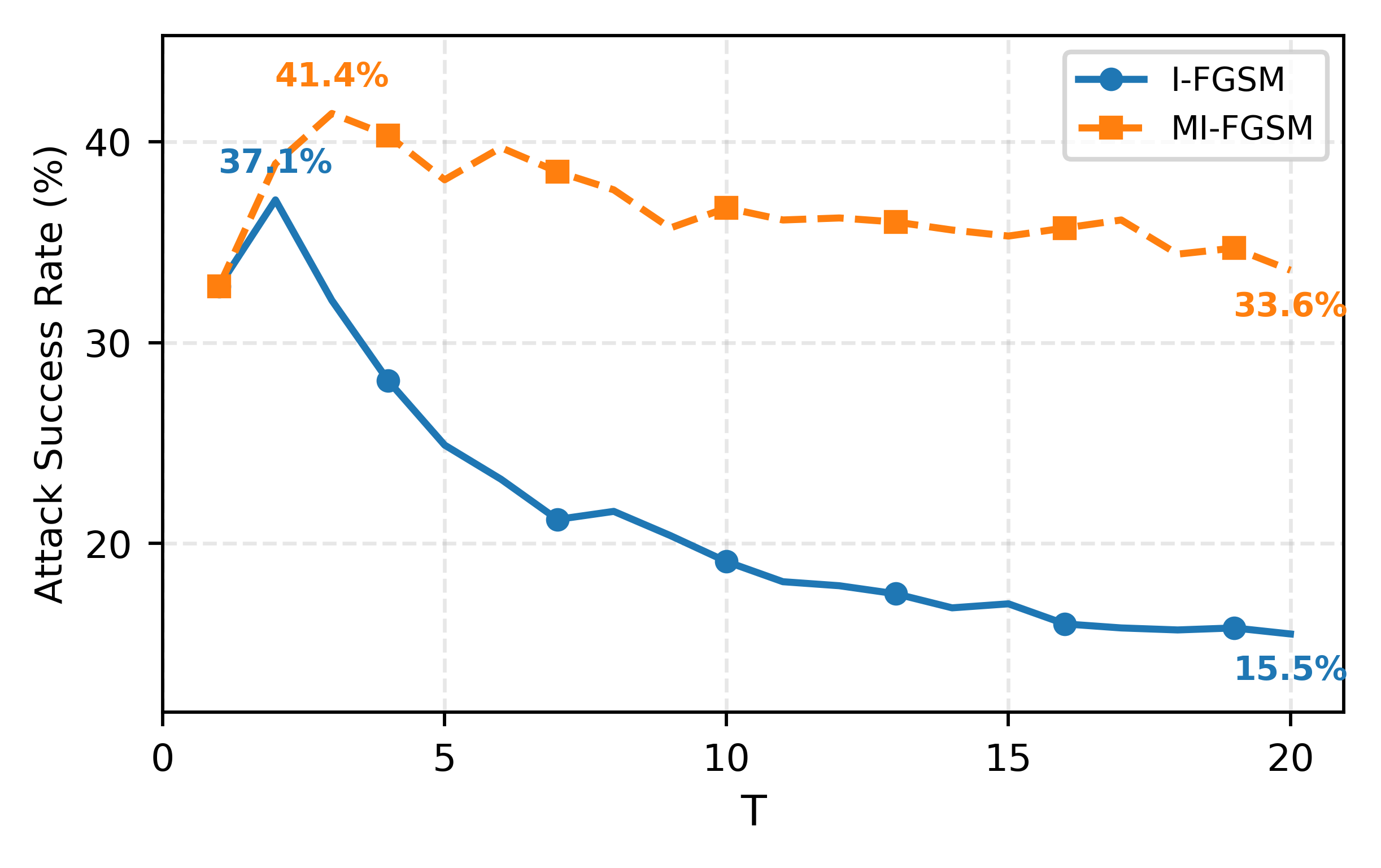}
\caption{Stability comparison of typical sign-based transfer attacks on the NIPS2017 dataset. We employ Res50 as surrogate model and Inc-v3 as target model with $\epsilon = 16/255$.}
\label{fig:instable ASR}
\end{figure}

\begin{figure}[htbp]
\setlength{\abovecaptionskip}{-0.05cm}
\setlength{\belowcaptionskip}{-0.1cm}
\centering
\includegraphics[width=0.42\textwidth]{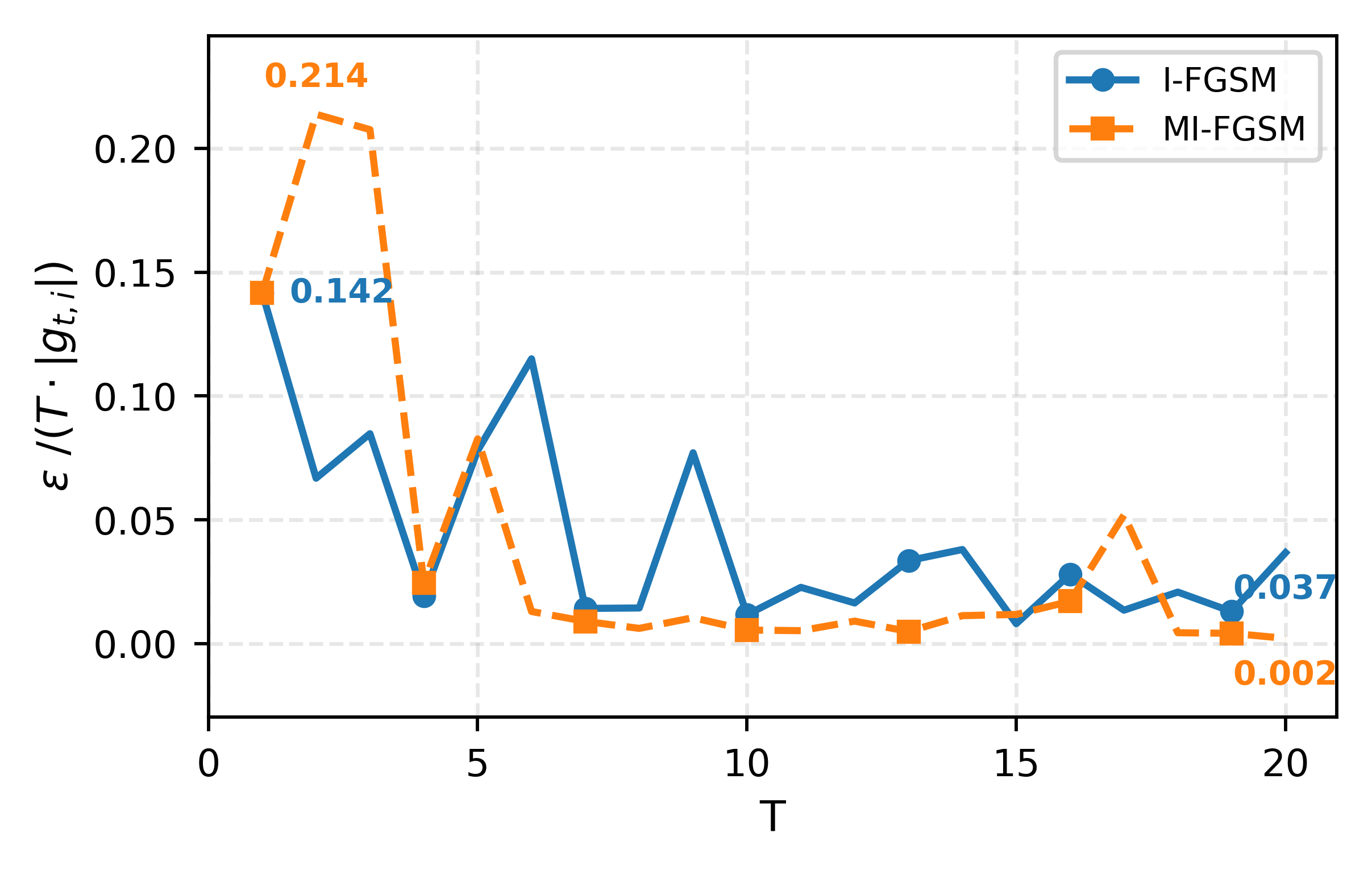}
\caption{Step-size dynamics of a randomly selected coordinate in typical sign-based attacks. $T$ is the maximum iteration. We reformulate $\text{sign}(\bm{g}_t)$ as $\bm{g_{t,i}} / |\bm{g_{t,i}}|$ where $\bm{g}$ denotes the gradient or momentum and $i$ is the number of the randomly selected coordinate. So, the step-size is $\epsilon/(T |\bm{g_{t,i}}|)$ for coordinate-wise gradient.}
\label{fig:instable stepsize}
\end{figure}

To discuss the convergence failures of sign-based gradient, we are naturally led to investigate their step-size scheduling. With regard to the sign-based gradient direction, the step-size is usually set to constant such as $\alpha=\epsilon/T$, where $T$ is the total number of iterations. However, when reformulating I-FGSM as a standard coordinate-wise gradient descent, we reveal that the step-size for each coordinate becomes heavily depends on its gradient magnitude, thereby violating the classical conditions for convergence that require step-size decaying. As visualized in Fig. \ref{fig:instable stepsize}, we plot the progression of a randomly chosen coordinate, the step-sizes in I-FGSM fluctuate without any discernible decaying structure. While the momentum can avoid the fluctuation of an individual gradient, this non-decreasing behavior is similarly pronounced in MI-FGSM, confirming that non-decaying step-size scheduling is an inherent characteristic of sign-based optimization paradigms.

In iterative optimization, a large initial step-size often facilitates rapid progress toward a region near the optimum, whereas a subsequent gradual decay enables finer adjustments for stable convergence. This behavior is particularly important in adversarial attacks, where only a limited number of iterations is performed for the constrained local optimization. It should be indicated that the strategy of using a monotonically decreasing step-size occupies a central position in optimization theory, serving as a foundational component for algorithm design with both theoretical and practical importance. It provides essential convergence guarantees for a broad family of iterative methods. For example, in the convex setting, classical schedules such as $\alpha_t = \alpha / \sqrt{t}$ ensure that the regular gradient descent method achieves an optimal convergence rate of $O(1/\sqrt{t})$ \cite{bertsekas2003convex}. 

A seminal adaptation of monotonically decreasing step-size to the coordinate-wise setting is the work of the adaptive mechanism known as AdaGrad \cite{Duchi2010AdaptiveSM}. Unlike global decaying schedules, AdaGrad adapts each coordinate-wise step-size inversely to the root sum of squares of its past gradients. This construction naturally enforces a monotonically decreasing coordinate-wise step-size (MDCS), allowing AdaGrad to achieve a tighter optimal convergence rate for general convex objectives in sparse learning.

The critical importance of MDCS was underscored by the failure of convergence analysis about Adam \cite{Kingma2015AdamAM} identified in \cite{reddi2019convergence}. Their analysis shows that enforcing MDCS rectifies this issue, leading to the development of AMSGrad. Although Adam has long been the dominant optimizer for training deep neural networks, AMSGrad, despite being a simple variant by incorporating MDCS, not only easily derives theoretical convergence but also frequently outperforms Adam in empirical evaluations \cite{reddi2019convergence}.

The frequent fluctuation of the sign of perturbation has already been observed in \cite{Zhu2024GRA}. Such fluctuations can cause the optimization to stagnate in local optima due to the constant step-size used in sign-based optimizers. To mitigate this issue, they introduced a decay indicator that dynamically adjusts the step-size in response to sign fluctuations, thereby enhancing the overall optimization process \cite{Zhu2024GRA}. 

Building upon the aforementioned analysis, we will incorporate MDCS into state-of-the-art sign-based optimizers to address their non-convergence and instability issues. Our contributions can be summarized as follows,
\begin{itemize}
\item By reformulating sign-based optimizer as a specific coordinate-wise gradient descent, we present an interesting viewpoint from the perspective of optimization, revealing that one cause for the attack instability is their step-size rule. 
\item Based on our new viewpoint,we propose a series of new attack algorithms to enhance stability, in which MDCS is enforced in sign-based optimizers. Typically, we prove that MI-FGSM with MDCS attains its optimal averaging convergence for general convex problems.
\item  Experimental comparisons with state-of-the-art sign-based optimizers on image classification, VQA and image-text retrieval tasks illustrate that our approach not only significantly improves the attack success rates but also enhances their stability.
\end{itemize}
\section{Related Work}
\label{sec:Related Work}

In this section, we describe the optimization problem for adversarial attacks and several typical optimizers.

Let $\mathcal{S}=\{(\bm{x}_1, y_1), \dots, (\bm{x}_m, y_m)\}$ be a training set, where $y_i$ is the label of $\bm{x}_i \in \mathbb{R}^{d} $. %and the sample $(\bm{x}_{i},y_{i})$ is uniformly random chosen from a distribution $\mathcal{D}$.
Given a classifier $f_{\bm{\theta}}$ with a predefined $\bm{\theta}$, generating a non-targeted AE $\bm{x}^{adv}$ from a real example $\bm{x}$ can be formulated as a constrained optimization problem \cite{Madry2018TowardsDL,Goodfellow2015ExplainingAH},
\begin{equation}\label{adv-optimization}
\max J(f_{\theta}(\bm{x}^{adv}),y), \ s. t. \  \|\bm{x}^{adv}-\bm{x}\|_p \leq \epsilon,
\end{equation}
where $J(f_{\theta}(\bm{x}),y)$ is the loss function. Obviously, optimization problem (\ref{adv-optimization}) coincides with our intuition, i.e., adversarial attack is to find an AE that misleads the model prediction (i.e., $ f_{\bm{\theta}}(\bm{x}^{adv})\neq y$) while the $L_p$-norm of the adversarial perturbation $\|\bm{x}^{adv}-\bm{x}\|_p$ should be restricted to a threshold $\epsilon$. As the constrained domain is typically small under the imperceptibility constraint, problem (\ref{adv-optimization}) becomes a local optimization problem, which is a key distinction from conventional optimization settings. In this paper, we only consider the cross-entropy loss and $p=\infty$. For convenience,  we denote $\mathbf{Q}=\{\bm{z}: \| \bm{z}-\bm{x}\|_\infty \leq \epsilon\}$ and  rewrite $J(f_{\theta}(\bm{x}^{adv}),y)$ as $J(\boldsymbol{x}^{adv})$.

%Alternatively, learning adversarial examples can also be described as a {\it regularized} optimization problem \cite{Carlini2017TowardsET}
%\begin{equation}\label{reg-optimization}
%\min \lambda\|\bm{x}^{adv}-\bm{x}\|_p- J(\bm{x}^{adv},y),
%\end{equation}
%where $\lambda$ is the trade-off parameter. In this paper, we only consider the cross-entropy loss and $p=\infty$. For convenience, $J(f_{\theta}(\bm{x}^{adv}),y)$ will be rewritten as $J(\boldsymbol{x}^{adv}_{t})$.

% \subsection{Gradient-based Attacks}
FGSM \cite{Goodfellow2015ExplainingAH} is a basic gradient-based attack. It has only one-step update, i.e.,
\begin{equation}\label{FGSM}
\bm{x}^{adv}=\bm{x}+\epsilon \ \mathrm{sign}(\nabla_{\bm{x}} J(\bm{x})),
\end{equation}
where $\mathrm{sign}(\cdot)$ is the sign function. From (\ref{FGSM}), it is easy to know $\|\bm{x}^{adv}-\bm{x}\|_\infty \leq \epsilon$.

%\textbf{Iterative Fast Gradient Sign Method (I-FGSM).}
I-FGSM \cite{Kurakin2017AdversarialEI} is a FGSM with multiple iterative steps, 
\begin{equation}\label{I-FGSM}
\bm{x}_{t+1}^{adv}= \bm{x}_{t}^{adv} + \alpha \ \mathrm{sign}(\nabla_{\bm{x}} J(\bm{x}_{t}^{adv})),
\end{equation}
where $\bm{x}^{adv}_0 = \bm{x}$. The step-size $\alpha$ is set to $\epsilon/T$ so that $\|\bm{x}_{t}^{adv}-\bm{x}\|_\infty \leq \epsilon,\  0 \leq \forall t\leq T$, where $T$ is the total number of iterations. It should be noted that $\alpha$ only determines the magnitude of the update along $\mathrm{sign}(\nabla_{\bm{x}} J(\bm{x}_{t}^{adv}))$ rather than along its real gradient $\nabla_{\bm{x}} J(\bm{x}_{t}^{adv})$.

PGD \cite{Madry2018TowardsDL} is also multiple iterative steps FGSM but starting from a random starting point, i.e.,
\begin{equation}\label{PGD}
\bm{x}_{t+1}^{adv} = \mathrm{Clip}_{\bm{x}}^\epsilon \left[\bm{x}_{t}^{adv} + \alpha \ \mathrm{sign} (\nabla_{\bm{x}} J(\bm{x}_{t}^{adv}))\right],
\end{equation}
As the step-size is not restricted, a clipping (\ref{clip}) operation is used to ensure that $\|\bm{x}^{adv}-\bm{x}\|_\infty \leq \epsilon$. For an image $\bm{x}=(x_1,x_2,x_3)$ which is 3-D tensor, its clipping is \cite{Kurakin2017AdversarialEI}
\begin{equation}\label{clip}
\begin{aligned}
& \mathrm{Clip}_{\bm{x}}^\epsilon \left[\bm{x}(x_1,x_2,x_3)\right] =\min \{ 255, \bm{x}(x_1,x_2,x_3)+\epsilon,\\
& \max\{0, \bm{x}(x_1,x_2,x_3)-\epsilon, \bm{x}(x_1,x_2,x_3)\}\}.
\end{aligned}
\end{equation}

MI-FGSM \cite{Dong2018BoostingAA} integrates HB momentum \cite{Polyak1964SomeMO} into the iteration of I-FGSM. Its update is
\begin{equation}\label{MI}
\begin{array}{l}
\bm{m}_{t+1}=\mu \ \bm{m}_t  +  \displaystyle\frac{\nabla_{\bm{x}} J(\bm{x}_{t}^{adv})}{\|\nabla_{\bm{x}} J(\bm{x}_{t}^{adv})\|_1}\\
\bm{x}_{t+1}^{adv}=\mathrm{Clip}_{ \bm{x}}^\epsilon \left(\bm{x}_{t}^{adv} + \alpha \ \mathrm{sign}(\bm{m}_{t+1})\right)
\end{array},
\end{equation}
where $\alpha=\epsilon/T$. Note that $\bm{m}_{t+1} $ is the sum of weighted past gradients and then this iterative direction is more stable than the individual $\mathrm{sign}(\nabla_{\bm{x}} J(\bm{x}_{t}^{adv}))$ in I-FGSM and PGD. 

Based upon MI-FGSM, a number of momentum-based attack methods have been developed by employing more techniques from optimization theory. For instance, NI-FGSM \cite{Lin2020NesterovAG} integrates Nesterov's momentum, VMI-FGSM \cite{Wang2021BoostingTT} introduces variance tuning, and GRA \cite{Zhu2024GRA} incorporates variance tuning and decay indicator techniques. Recent studies suggest that AEs located at flat maxima of the loss landscape tend to exhibit stronger transferability. For example, 
% RAP \cite{Zhu2024RAP} explicitly searches for AEs in flat loss regions, but at the cost of increased computation. Subsequent methods such as 
PGN \cite{Ge2023Boosting} and MEF \cite{Qiu2024Enhancing} mitigate this issue by incorporating a gradient norm penalty into the loss.

Another line of work focuses on improving transferability through input transformation. DI \cite{Xie2019ImprovingTO} pioneered this direction by applying random resizing and padding to inputs. This was followed by techniques such as TI \cite{Dong2019EvadingDT}, SI \cite{Lin2020NesterovAG}, Admix \cite{Wang2021admix}, and BSR \cite{wang2024bsr}. Such transformation strategies are commonly integrated with optimization-based attacks. Recently, OPS \cite{guo2025boosting} improves optimization efficiency by randomly selecting transformations at each iteration, thereby promoting gradient diversity during the attack process.

%For attacking VLPs, the work in \cite{zhang2022towards} has demonstrated that a naive strategy of independently perturbing image and text modalities is fundamentally flawed. Without coordinating the adversarial perturbations to account for cross-modal interactions, the unimodal attacks may conflict with each other.  By considering the consistency between attacks of different modalities, Co-Attack collaboratively combines multimodal perturbations towards a stronger adversarial attack.T

Recent efforts to generate transferable AEs for VLMs have progressively focused on enhancing sample diversity. Initial works like Co-Attack \cite{zhang2022towards} established a baseline with collaborative multimodal perturbations. Subsequently, SGA \cite{lu2023sga} employed online data augmentation, which was refined by DRA \cite{gao2024dra} through sampling within the adversarial trajectory's intersection to mitigate overfitting. SA-AET \cite{jia2025semantic} further leverage an adversarial evolution triangle for more principled diversification while simultaneously crafting attacks in a semantic-aligned subspace to reduce model dependency.  Despite the increasing sophistication in handling multimodal interactions, all these methods retain PGD as their underlying optimizer to generate AEs.

\section{The Proposed MDCS}

In this section, we first reformulate I-FGSM as a coordinate-wise gradient descent and explicitly describe its step-size scheme, then we propose our MDCS strategy.

Let $x_{t+1,i}^{adv}$ be the $i$-th component of $\bm{x}_{t+1}^{adv}$ generated by I-FGSM, i.e.,
$$
x_{t+1,i}^{adv}= x_{t,i}^{adv} + \alpha \ \mathrm{sign} (\frac{\partial J(\bm{x}_{t}^{adv})}{\partial x_i}).
$$
Then, we have
\begin{equation}\label{I-FGSM-i}
x_{t+1,i}^{adv}= x_{t,i}^{adv} +\frac{\alpha}{|\frac{\partial J(\bm{x}_{t}^{adv})}{\partial x_i}| }
\frac{\partial J(\bm{x}_{t}^{adv})}{\partial x_i}.
\end{equation}
According to optimization theory, the coordinate-wise step-size in I-FGSM is formally defined as $\frac{\alpha}{|\partial J(\bm{x}_{t}^{adv})/\partial x_i|}$. This formulation implies that the step-size is governed by the gradient magnitude. As the attack converges near a local optimum, the gradient approaches zero, causing the step-size to diverge and exhibit unexpected fluctuations, a behavior we depict in Fig. \ref{fig:instable stepsize}. 

The step-size analysis for I-FGSM is fully applicable to all the sign-based optimizers. Specifically, the coordinate-wise step-size of MI-FGSM is $\frac{\alpha}{|{m}_{t+1, i}|}$. The key distinction lies in the fact that MI-FGSM enjoys better stability owing to momentum being a weighted sum of past gradients.

Note that the regular Adam \cite{Kingma2015AdamAM} for solving problem (\ref{adv-optimization}) is
\begin{equation}\label{adam}
\begin{array}{l}
\bm{m}_{t}=\beta_{1t} \ \bm{m}_{t-1}  + (1-\beta_{1t}) \nabla_{\bm{x}}{J}(\bm{x}_{t}),\\
\bm{V}_{t}=\beta_{2t}\bm{V}_{t-1}+(1-\beta_{2t})\text{diag}(\nabla_{\bm{x}}{J}(\bm{x}_{t}) {\nabla_{\bm{x}}{J}(\bm{x}_{t})}^{\top}),\\
\bm{x}_{t+1} = P_{\mathbf{Q},\bm{V}^{-1}_{t}}(\bm{x}_{t} - \alpha_t  \bm{V}_{t}^{-\frac{1}{2}}\bm{m}_{t}),
\end{array}
\end{equation}
where the hyper-parameters $\beta_{1t},\beta_{2t} \in [0, 1)$ control the exponential decay rates of the moving averages (EMA), and $\bm{V}_{t}$ is a $d\times d$ diagonal matrix. For a positive definite matrix $\bm{V}_{t}^{-1}$, the weighted $L_2$-norm (Mahalanobis norm) is defined by $\|\bm{x}\|_{\bm{V}_{t}^{-1}}^{2} = \bm{x}^{\top}\bm{V}_{t}^{-1}\bm{x}$. The projection $P_{\mathbf{Q},\bm{V}^{-1}_{t}}(\bm{x})$  of $\bm{x}$ onto $\mathbf{Q}$ is defined by 
$$P_{\mathbf{Q},\bm{V}^{-1}_{t}}(\bm{x})=\min_{\bm{z} \in \mathbf{Q}} \|\bm{x} - \bm{z} \|_{\bm{V}^{-1}_{t}}=\min_{\bm{z} \in \mathbf{Q}} \|\bm{V}^{-\frac{1}{2}}(\bm{x} - \bm{z}) \|.$$

The primary challenge in establishing the convergence guarantee for Adam-type optimizers \cite{Kingma2015AdamAM} stems from their use of EMA, which can violate the standard step-size decay condition required for conventional theoretical analysis \cite{reddi2019convergence}. To resolve this theoretical issue, Reddi at al. \cite{reddi2019convergence} demonstrate that provable convergence can be readily recovered by enforcing MDCS, i.e.,
\begin{equation}\label{AMSgrad}
\begin{array}{l}
\bm{m}_{t}=\beta_{1t} \ \bm{m}_{t-1}  + (1-\beta_{1t}) \nabla_{\bm{x}}{J}(\bm{x}_{t}), \\
V_{t}=\beta_{2t}V_{t-1}+(1-\beta_{2t})\text{diag}(\nabla_{\bm{x}}{J}(\bm{x}_{t}) {\nabla_{\bm{x}}{J}(\bm{x}_{t})}^{\top}),\\
\hat{V_t} = \max\{\hat{V}_{t-1}, V_{t}\}\\
\bm{x}_{t+1} = P_{\mathbf{Q},\bm{\hat{V}}^{-1}_{t}}(\bm{x}_{t} - \alpha_t  \bm{\hat{V}}_{t}^{-\frac{1}{2}}\bm{m}_{t}).
\end{array}
\end{equation}
Such an algorithm is called AMSGrad \cite{reddi2019convergence}. It has been proved to attain the optimal convergence while deriving better empirical performance in image classification \cite{reddi2019convergence}.

Motivated by AMSGrad, we propose a broad class of optimizers based on available sign-based attacks, which ensures that the coordinate-wise step-size is monotonically decreasing. Specifically, the detailed steps of MI-FGSM with MDCS are shown in Algorithm \ref{alg:MDCS}.
\begin{algorithm}
\caption{MDCS-MI}
\label{alg:MDCS}
    \begin{algorithmic}[1]
    \Require
    Loss function $J$, a raw example $\bm{x}$ with ground-truth label $y$, the perturbation size $\epsilon$, momentum parameter $0 < \beta_t \leq 1$, step-size $\alpha_t > 0$, maximum iteration $T$.
    \State Initialize $\bm{x}^{adv}_0=\bm{x}$, $\bm{m}_1=\bm{0}$,  $d_{0,i} = 1$.
    \Repeat
    \State $\bm{m}_{t+1}=\beta_{t} \ \bm{m}_{t}  + \displaystyle\frac{\nabla_{\bm{x}} J(\bm{x}_{t}^{adv})}{\|\nabla_{\bm{x}} J(\bm{x}_{t}^{adv})\|_1}$, 
    \State $d_{t,i} = \mathrm{min} (\frac{1}{|\bm{m}_{t+1,i}| }, d_{t-1,i} )$, 
    \State $\bm{D}_t = \mathrm{diag}(\bm{d}_{t})$,
    \State $\bm{x}_{t+1}^{adv}= P_{\mathbf{Q}, \bm{D}^{-1}_t} (\bm{x}_{t}^{adv} + \alpha_t  \bm{D}_t \bm{m}_{t+1}).$
    \Until {$t = T$}
    \Ensure  $\bm{x}_{T}^{adv}$.
    \end{algorithmic}
\end{algorithm}

For an image $\bm{x}=(x_1,x_2,x_3)$ which is 3-D tensor,
\begin{equation}\label{proj}
\begin{aligned}
\bm{x}_{t+1}^{adv}
&= P_{\mathbf{Q}, \bm{D}^{-1}_t} (\bm{x}_{t}^{adv} + \alpha_t  \bm{D}_t \bm{m}_{t+1}) \\
&=\mathrm{Clip}_{\bm{x}}^\epsilon \left[\bm{D}_{t}^{-\frac{1}{2}}(\bm{x}_{t}^{adv} + \alpha_t  \bm{D}_t \bm{m}_{t+1})\right].
\end{aligned}
\end{equation}
It is easy to find that $d_{t,i} \leq d_{t-1,i} \leq 1, \ \forall t>0$. Obviously, the main difference from MI-FGSM is that we employ MDCS instead of the sign operation. To conduct convergence analysis, some assumptions are required. The detailed proof is given in Supplementary Material.

\begin{assumption}\label{ass:gfinite}
Assume that there exists a constant $M > 0$ such that
\begin{equation*}
\|\nabla_{\bm{x}}{J}(\bm{x}) \|_1 \leq M,  \ \forall \bm{x} \in \mathbf{Q}.
\end{equation*}
\end{assumption}

\begin{assumption}\label{ass:wfinite}
Assume that there exists a constant $G > 0$ such that
\begin{equation*}
  \| \boldsymbol{x}_1 - \boldsymbol{x}_2 \| \leq G,  \forall \bm{x}_1, \bm{x}_2 \in \mathbf{Q}.
\end{equation*}
\end{assumption}

\begin{theorem}
\label{thm:bigtheorem}
Suppose the objective function $J(\boldsymbol{x})$ is concave on $\mathbf{Q}$ and $\boldsymbol{x}^{\ast} $ is a solution of problem (\ref{adv-optimization}). Let Assumption \ref{ass:gfinite} and \ref{ass:wfinite} hold and let $\{ \boldsymbol{x}^{adv}_{t}\}_{t=1}^{T}$ be generated by \cref{alg:MDCS}. Assume $0<\beta<1$, $0<\lambda<1$ and $\gamma > 0$. Let $\beta_t = \beta \lambda^{t-1}$ and $\alpha_t = \frac{\gamma}{\sqrt{t}}$. Then we have
\begin{align*}
J(\boldsymbol{x}^\ast)-J(\boldsymbol{\bar{x}}^{adv}_T)
& \leq O(\frac{1}{\sqrt{T}})
\end{align*}
where $\boldsymbol{\bar{x}}^{adv}_T = \frac{1}{T}\sum_{t=1}^{T}\boldsymbol{x}^{adv}_{t}$.
\end{theorem}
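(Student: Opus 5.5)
We follow the AMSGrad-style regret analysis of \cite{reddi2019convergence}, adapted to ascent on the concave $J$ with the diagonal preconditioner $\bm{D}_t$. Write $\bm{g}_t=\nabla_{\bm{x}}J(\bm{x}^{adv}_t)$ and $\hat{\bm{g}}_t=\bm{g}_t/\|\bm{g}_t\|_1$. By concavity, $J(\bm{x}^\ast)-J(\bm{x}^{adv}_t)\le \langle \bm{g}_t,\bm{x}^\ast-\bm{x}^{adv}_t\rangle$. Jensen's inequality applied to $\bar{\bm{x}}^{adv}_T$ then reduces the claim to bounding the average of these linearized gaps by $O(1/\sqrt T)$.

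\textbf{Step 1: one-step inequality.} The update is a projection in the $\bm{D}_t^{-1}$-norm. We use nonexpansiveness of $P_{\mathbf{Q},\bm{D}_t^{-1}}$ in that norm, with $\bm{x}^\ast\in\mathbf{Q}$, and expand:
\[
\|\bm{x}^{adv}_{t+1}-\bm{x}^\ast\|^2_{\bm{D}_t^{-1}}\le \|\bm{x}^{adv}_t-\bm{x}^\ast\|^2_{\bm{D}_t^{-1}}+2\alpha_t\langle \bm{m}_{t+1},\bm{x}^{adv}_t-\bm{x}^\ast\rangle+\alpha_t^2\|\bm{m}_{t+1}\|^2_{\bm{D}_t}.
\]
Substituting $\bm{m}_{t+1}=\beta_t\bm{m}_t+\hat{\bm{g}}_t$ isolates $\langle\hat{\bm{g}}_t,\bm{x}^\ast-\bm{x}^{adv}_t\rangle$. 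Each such term is bounded by
\[
\frac{1}{2\alpha_t}\Big(\|\bm{x}^{adv}_t-\bm{x}^\ast\|^2_{\bm{D}_t^{-1}}-\|\bm{x}^{adv}_{t+1}-\bm{x}^\ast\|^2_{\bm{D}_t^{-1}}\Big)+\frac{\alpha_t}{2}\|\bm{m}_{t+1}\|^2_{\bm{D}_t}+\beta_t\langle \bm{m}_t,\bm{x}^{adv}_t-\bm{x}^\ast\rangle .
\]

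\textbf{Step 2: sum and telescope.} MDCS gives $d_{t,i}\le d_{t-1,i}$, so $\frac{1}{\alpha_t d_{t,i}}-\frac{1}{\alpha_{t-1}d_{t-1,i}}\ge 0$. Rearranging the distance terms, we bound their sum using $|x^{adv}_{t,i}-x^\ast_i|\le 2\epsilon$ (or Assumption~\ref{ass:wfinite}). The sum telescopes to $\frac{G^2}{2\alpha_T}\sum_i d_{T,i}^{-1}$, which is $O(\sqrt T)$ provided $d_{T,i}^{-1}$ is bounded.

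For the second term we use $d_{t,i}\le 1/|m_{t+1,i}|$, which gives $\|\bm{m}_{t+1}\|^2_{\bm{D}_t}\le\|\bm{m}_{t+1}\|_1$. Since $\|\hat{\bm{g}}_t\|_1=1$ and $\beta_t\le\beta<1$, we have $\|\bm{m}_{t+1}\|_1\le 1/(1-\beta)$. Therefore $\sum_t \frac{\alpha_t}{2}\|\bm{m}_{t+1}\|^2_{\bm{D}_t}\le \frac{\gamma\sqrt T}{1-\beta}$.

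For the momentum term, $|\langle\bm{m}_t,\bm{x}^{adv}_t-\bm{x}^\ast\rangle|\le \|\bm{m}_t\|_1\cdot 2\epsilon$. Because $\beta_t=\beta\lambda^{t-1}$, its sum is at most $\frac{2\epsilon\beta}{(1-\beta)(1-\lambda)}=O(1)$.

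\textbf{Step 3: rescale to the true gradient.} Multiplying the bound on $\langle\hat{\bm{g}}_t,\cdot\rangle$ by $\|\bm{g}_t\|_1\le M$ (Assumption~\ref{ass:gfinite}) recovers $\langle\bm{g}_t,\cdot\rangle$. Care is needed because the telescoping used unweighted terms. The cleanest route is to bound each linearized gap termwise: $\langle \bm{g}_t,\cdot\rangle = \|\bm{g}_t\|_1\langle\hat{\bm{g}}_t,\cdot\rangle\le M\max(0,\langle\hat{\bm{g}}_t,\cdot\rangle)$. Alternatively, one can treat $\hat{\bm{g}}_t$ as the gradient of the positively rescaled concave surrogate and absorb the factor $M$. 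Dividing by $T$ gives $O(1/\sqrt T)$.

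\textbf{Main obstacle.} The delicate point is the telescoping term: it needs $\sum_i d_{T,i}^{-1}$ to be bounded. Since $d_{T,i}^{-1}=\max(1,\max_{s\le T}|m_{s+1,i}|)\le \max(1,1/(1-\beta))$, this holds, giving the bound $\frac{dG^2}{2\gamma(1-\beta)}\sqrt T$ with $d$ the dimension. We would verify carefully that the projection nonexpansiveness in the changing metric $\bm{D}_t^{-1}$ is consistent with the clip formulation (\ref{proj}); if it is not, we would fall back on the box structure of $\mathbf{Q}$, where the weighted projection is coordinatewise clipping for any diagonal metric. The nonnegativity trick in Step 3 also needs a check. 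If the termwise positive-part bound does not go through, we would assume $\|\bm{g}_t\|_1$ is bounded below, or redefine the analysis on normalized gradients.
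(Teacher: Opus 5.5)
Your argument is the paper's proof: nonexpansiveness of $P_{\mathbf{Q},\bm{D}_t^{-1}}$ with $\bm{x}^\ast$ as a fixed point, substitution of $\bm{m}_{t+1}=\beta_t\bm{m}_t+\hat{\bm{g}}_t$, telescoping with $\alpha_t d_{t,i}$ nonincreasing and a uniform bound on $d_{T,i}^{-1}$, summability of $\beta_t=\beta\lambda^{t-1}$, and Jensen at the end. Your treatment of the remainder terms is slightly tighter than the paper's:
\begin{itemize}
\item For the squared-step term you use $d_{t,i}\le 1/|m_{t+1,i}|$ to get $\|\bm{m}_{t+1}\|^2_{\bm{D}_t}\le\|\bm{m}_{t+1}\|_1\le 1/(1-\beta)$. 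The paper instead uses $d_{t,i}\le 1$ together with $\|\bm{m}_{t+1}\|\le M_2$.
\item For the momentum cross term you use H\"older with $\|\bm{x}^{adv}_t-\bm{x}^\ast\|_\infty\le 2\epsilon$. The paper uses Young's inequality, which produces $\sum_t\lambda^{t-1}\sqrt{t}$.
\end{itemize}
Either way the total is $O(\sqrt{T})+O(1)$.

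The step you leave open, Step 3, does have a missing fact, but it is a one-liner and makes your fallbacks unnecessary. Write $a_t:=\langle\hat{\bm{g}}_t,\bm{x}^\ast-\bm{x}^{adv}_t\rangle$. The telescoped inequality controls $\sum_t a_t$, not $\sum_t\max(0,a_t)$, so your positive-part bound only closes if $a_t\ge 0$. This holds because $\bm{x}^\ast$ maximizes $J$ over $\mathbf{Q}$ and $\bm{x}^{adv}_t\in\mathbf{Q}$ (it is the output of a projection onto $\mathbf{Q}$). Concavity then gives $\langle\bm{g}_t,\bm{x}^\ast-\bm{x}^{adv}_t\rangle\ge J(\bm{x}^\ast)-J(\bm{x}^{adv}_t)\ge 0$, and hence
\[
\frac{J(\bm{x}^\ast)-J(\bm{x}^{adv}_t)}{M}\le\frac{J(\bm{x}^\ast)-J(\bm{x}^{adv}_t)}{\|\bm{g}_t\|_1}\le a_t .
\]
This is exactly the inequality the paper sums before dividing by $T$. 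Your ``rescaled surrogate'' variant needs the same sign fact to absorb $M$, since the factor $1/\|\bm{g}_t\|_1$ changes with $t$. No lower bound on $\|\bm{g}_t\|_1$ is required.

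Your concern about the changing metric is also harmless. Each one-step inequality compares $\bm{x}^{adv}_{t+1}$ and $\bm{x}^\ast$ in the single norm $\|\cdot\|_{\bm{D}_t^{-1}}$, in which the projection is nonexpansive. The change of metric between steps is exactly what the telescoping absorbs. Finally, because $\bm{D}_t$ is diagonal, the weighted projection onto the box $\mathbf{Q}$ is plain coordinatewise clipping of $\bm{x}^{adv}_t+\alpha_t\bm{D}_t\bm{m}_{t+1}$.
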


It is necessary to give some remarks.

\begin{itemize}
\item \cref{thm:bigtheorem} indicates that MDCS-MI achieves optimal convergence rate $O(1/\sqrt{T})$, which avoids the non-convergence issue inherent in MI-FGSM caused by the sign-based operation.

\item Our theoretical analysis can be readily extended to other sign-based optimizers such as I-FGSM and PGD. Due to using MDCS, the proof here is similar to that of AdaGrad and AMSGrad in spirit.

\item If an AE $\boldsymbol{x}^{adv}$ is successfully generated from a correctly classified sample $\boldsymbol{x}$, it holds that $J(\boldsymbol{x}^{adv}) > J(\boldsymbol{x})$. The human-imperceptibility constraint restricts $\boldsymbol{x}^{adv}$ to a confined neighborhood of $\boldsymbol{x}$, thus justifying the mild assumption that the objective function is locally concave.

\end{itemize}

\section{Experiments}

In this section, we show the efficacy of our MDCS strategy through extensive experiments on image classification models and VLMs. Typically, we only focus on comparing with some recent optimization-based methods under untargeted attack setting, using their standard hyperparameters. Transferability is quantified by the attack success rate, with a higher rate denoting more effective and transferable attacks. All experiments were performed on Hygon K100-AI 64GB DCUs. Due to the space limitation, more details and experimental results are given in \textbf{Supplementary Material}.

% The Supplementary Material is organized as follows: \cref{app:appendix} presents the convergence analysis of MDCS-MI; \cref{app:8.1.1} provides the pseudocode for MDCS-MEF and MDCS-OPS; \cref{app:8.1.2} details the ablation studies; and \cref{app:8.1.3} offers visualizations and imperceptibility.

\subsection{Evaluation on Image Classification Tasks}

Since our MDCS-MI is directly established upon MI-FGSM, we will conduct the same experiments as that in \citep{Dong2018BoostingAA}. Following \cite{Dong2018BoostingAA} and \cite{Lin2020NesterovAG}, we randomly sample 1000 images from NIPS2017 dataset.
% \footnote{{\url{https: //www.kaggle.com/competitions/nips-2017-non-targeted-adversarial-attack/data}}}
The normally trained models are chosen from both branches of CNNs and ViTs for black-box attacks, including ResNet-50 (Res50), VGG-16, MobileNet-v2 (Mob-v2), Inception-v3 (Inc-v3) in CNN branch; ViT-Base-patch16 (ViT-B/16) \cite{Alexey2021Vit}, PiT-Base (PiT-B) \cite{heo2021pit}, Visformer-Small (Vis-S) \cite{chen2021visformer} in ViT branch. Following \cite{Li2023Improving}, we also collect robust Inc-v3$_{ens3}$, Inc-v3$_{ens4}$, IncRes-v2$_{ens}$) and consider AT, HGD, RS, Bit-Red as defensed models.

The hyperparameters in problem (\ref{adv-optimization}) are fixed. We set $\epsilon={16}/{255}$ and $T = 10$. As usual, $\lambda = 0.999$ and $\beta = 0.999$ \cite{wang2019sadam, Tao2021TheRO, reddi2019convergence}. For convenience, we select $\alpha_t={\epsilon}\gamma/{T}$ in our MDCS-type algorithms. $\gamma$ is determined by simple grid search over the range [2, 4] to balance transferability and imperceptibility.

\subsubsection{Evaluation of Attack Stability}

To illustrate the benefits brought by our MDCS strategy, we examine the correlation between success rates and the maximum iteration $T$. As depicted in Fig. \ref{fig:stabilty}, integrating HB momentum into the iterative direction in MI-FGSM does contribute to stability. %Notably, when $T$ is small, MI-FGSM might exhibit superior performance to MDCS-MI due to its larger step-size.
Nonetheless, MDCS can improve transferability of MI-FGSM across different neural network architectures while consistently maintaining better stability.

\begin{figure*}[htbp]
\centering
\includegraphics[width=0.48\textwidth]{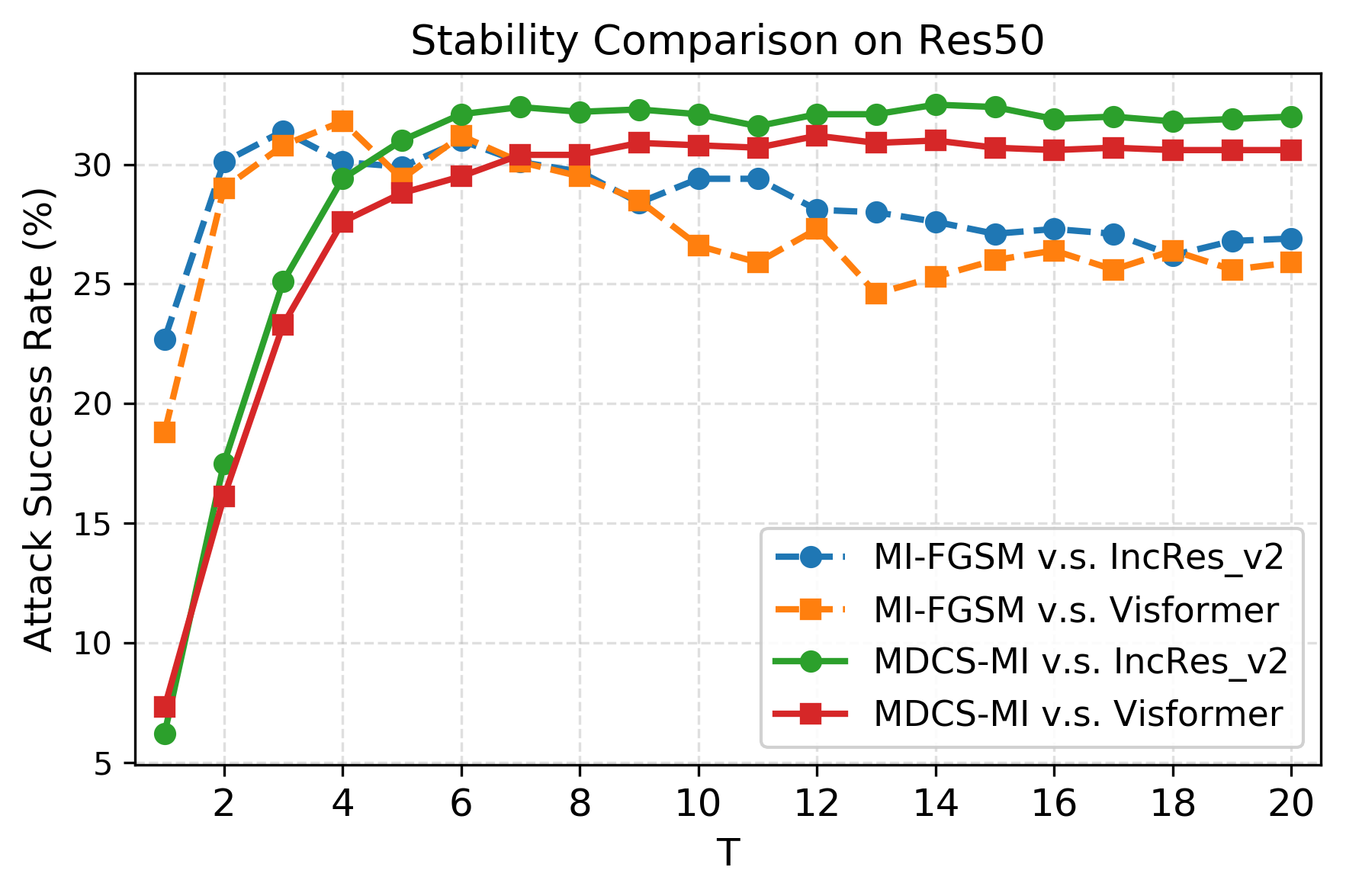}
\includegraphics[width=0.48\textwidth]{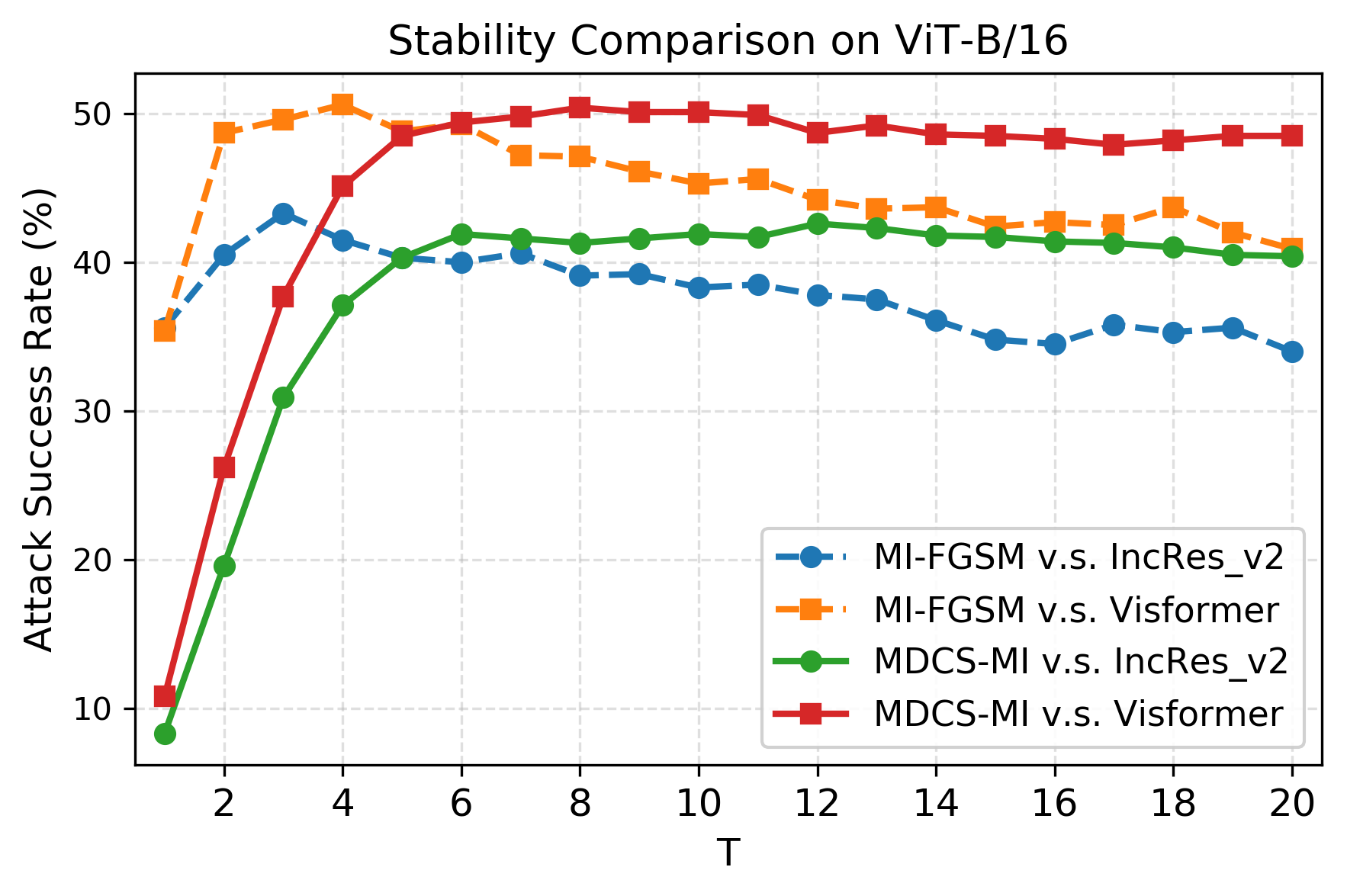}
\caption{Stability comparison of transfer attacks between MDCS-MI and MI-FGSM. AEs are crafted for Res50 and ViT-B/16, respectively.}
\label{fig:stabilty}
\end{figure*}

\subsubsection{Attacks on Normally Trained Models}

In this subsection, we evaluate the adversarial transferability of the proposed attack on normally trained models including both single and ensemble architectures. As far as we know, OPS \cite{guo2025boosting} currently achieves highly competitive performance compared to typical baseline attacks such as MI-FGSM \cite{Dong2018BoostingAA}, VMI-FGSM \cite{Wang2021BoostingTT}, PGN \cite{Ge2023Boosting} and MEF \cite{Qiu2024Enhancing}. Based on these findings and considering the central role of OPS as a state-of-the-art approach, we focused our subsequent comparative experiments primarily on MI-FGSM, MEF and OPS. In addition to single-model attacks, we evaluate our MDCS strategy in ensemble-model settings. Specifically, we focus on attacking an ensemble of equally-weighted models using logit-based integration. For comparative baselines, we include SVRE \cite{Xiong2022svre} and AdaEA \cite{Chen2021AdaEA}. 

As shown in Tab. \ref{tab:normal_attack} and \ref{tab:ensemble}, MDCS-MI consistently surpasses MI-FGSM in both single-model and ensemble attacks.  Moreover, integrating MDCS into other attacks also improves their transferability, with MDCS-OPS achieving the best performance. Visualizations in Fig. \ref{fig:vis_cv} confirm that the adversarial perturbations generated by our methods (MDCS-MI, MDCS-MEF, MDCS-OPS) remain imperceptible, thereby satisfying the requirement for stealth in practical scenarios.

\begin{table*}[htbp]\small
\captionsetup{font={normalsize}}
\caption{\normalsize{Transferability comparisons across different network architectures. AEs are crafted for Res50. The baseline attacks are divided into 3 categories: \ding{172} gradient-based attacks; \ding{173} momentum-based attacks; \ding{174} input-transformation attacks. The best results are marked in bold. The gray area represents adversarial attacks under the white-box setting.}}
\label{tab:normal_attack}
\centering
\begin{tabular}{c|c|ccccccc}%
\toprule %[2pt]
Type&Attack&Res50& VGG16 & Mob-v2 & Inc-v3 & ViT-B & PiT-B& Vis-S \\
\midrule %[2pt]
\multirow{2}{*}{\ding{172}}&AutoAttack \cite{Croce2020ReliableEO}& \cellcolor[rgb]{.827,.827,.827}99.5& 34.0& 30.6&13.9& 3.0& 8.2& 11.3\\ 
&I-FGSM \cite{Goodfellow2015ExplainingAH}& \cellcolor[rgb]{.827,.827,.827}99.7&36.3&34.1&18.5&6.8& 11.8& 14.6\\
\hline
\multirow{8}{*}{\ding{173}}&MI \cite{Dong2018BoostingAA}& \cellcolor[rgb]{.827,.827,.827}100.0& 59.4& 53.1&36.3& 12.5& 23.1& 26.6\\
&\bf{MDCS-MI} & \cellcolor[rgb]{.827,.827,.827}100.0&  67.2& 60.3&41.3& 13.4& 23.3&30.8\\
&VMI \cite{Wang2021BoostingTT}& \cellcolor[rgb]{.827,.827,.827}99.8& 69.2& 66.3&56.8& 31.0& 46.9&54.5\\
&GRA \cite{Zhu2024GRA}&\cellcolor[rgb]{.827,.827,.827}97.3& 85.2&  84.3&81.3& 44.6& 62.2& 72.9\\
&MUMODIG \cite{ren2025mumodig}&\cellcolor[rgb]{.827,.827,.827}98.9& 86.4& 84.5&80.2& 46.3& 65.5& 75.6\\
&PGN \cite{Ge2023Boosting}& \cellcolor[rgb]{.827,.827,.827}98.9& 88.4& 86.9&85.4& 49.1& 68.2& 76.6\\
&MEF \cite{Qiu2024Enhancing}& \cellcolor[rgb]{.827,.827,.827}99.3& 94.9& 94.4&91.2&65.3&81.1&88.2\\
&\bf{MDCS-MEF}& \cellcolor[rgb]{.827,.827,.827}100.0&96.4 & 95.5 & 93.4 & 58.7 & 78.8 & 91.0 \\ \hline
% \multirow{4}{*}{\ding{173}}&TRAP& 99.2$^*$& 95.9& 94.3&84.0& 27.3& 46.6& 63.5\\
% & BFA& 98.2$^*$& 93.6& 92.4& 85.0& 50.0& 72.8& 85.8\\
% &P2FA & 100.0$^*$&  98.3& 97.9&84.1& 37.0& 66.6&85.7\\
% &\bf{P2FA+MDCS}& 100.0$^*$&  & && & &\\ \hline
% \multirow{3}{*}{\ding{174}}&BPA& 96.3$^*$&  96.9& 94.9&85.4& 32.7& 49.7&74.3\\
% &DRA& 94.0$^*$&  96.6& 97.8&95.1& 73.0& 77.8&89.3\\
% &\bf{MDCS-DRA}& 95.3$^*$&  97.1& 98.2&95.5& 73.5& 78.0&89.9\\
\multirow{5}{*}{\ding{174}}&TI \cite{Dong2019EvadingDT}&\cellcolor[rgb]{ .827,  .827,  .827}97.8&57.9&	46.9&	38.9&	15.3&	16.5&	23.2\\
&DI \cite{Xie2019ImprovingTO}& \cellcolor[rgb]{.827,.827,.827}98.7&	71.0&	66.2&	57.1&	27.5&	39.7&	49.5\\
&SIA \cite{wang2023structure}& \cellcolor[rgb]{.827,.827,  .827}99.5& 94.5&  92.6 &  84.3&  53.6 &  77.2 &  86.5\\
 & BSR \cite{wang2024bsr}& \cellcolor[rgb]{.827,.827,.827}99.1& 97.2& 96.2& 89.7& 56.0& 80.8&89.2\\
&OPS \cite{guo2025boosting} & \cellcolor[rgb]{.827,.827,.827}99.5& 98.0& 	97.8& 98.2& 88.8& 93.8& 96.7\\
&\bf{MDCS-OPS}& \cellcolor[rgb]{.827,.827,.827}99.9&\bf{98.9} &\bf{99.0} &\bf{99.1} & \bf{89.3} & \bf{94.7}& \bf{97.9}\\
\hline
\end{tabular}
\end{table*}

\begin{table}\small
\captionsetup{font={normalsize}}
\caption{\normalsize{The success rates (\%) of adversarial attacks against ensemble models.}}
\label{tab:ensemble}
\centering
\begin{tabular}{c|cccc}%
\toprule %[2pt]
Attack&Inc-v3& Res34& Inc-v4& IncRes-v2\\	
\midrule %[2pt]
SVRE \cite{Xiong2022svre} & \cellcolor[rgb]{ .827,.827,.827}99.5& \cellcolor[rgb]{ .827,  .827,  .827}100.0& 74.6 &65.9\\
AdaEA \cite{Chen2021AdaEA} & \cellcolor[rgb]{ .827,  .827,  .827}99.6& \cellcolor[rgb]{ .827,  .827,  .827}100.0& 72.4&64.7\\
MI & \cellcolor[rgb]{ .827,  .827,  .827}99.6& \cellcolor[rgb]{ .827,  .827,  .827}100.0&75.6&  67.0\\
\bf{MDCS-MI}&  \cellcolor[rgb]{ .827,  .827,  .827}99.9& \cellcolor[rgb]{ .827,  .827,  .827}100.0&79.3& 73.5\\
MEF& \cellcolor[rgb]{ .827,  .827,  .827}99.9& \cellcolor[rgb]{ .827,  .827,  .827}100.0& 97.1& 95.2\\
\bf{MDCS-MEF}&  \cellcolor[rgb]{ .827,  .827,  .827}100.0& \cellcolor[rgb]{ .827,  .827,  .827}100.0& 97.9&96.6\\
OPS& \cellcolor[rgb]{ .827,  .827,  .827}99.9& \cellcolor[rgb]{ .827,  .827,  .827}100.0&99.7 &99.6 \\
\bf{MDCS-OPS}&  \cellcolor[rgb]{ .827,  .827,  .827}100.0& \cellcolor[rgb]{ .827,  .827,  .827}100.0& \bf{99.9}& \bf{99.8}\\
\hline
\end{tabular}
\end{table}
% \vspace{-0.5\baselineskip}

\subsubsection{Attacks on Robust Models}

In this subsection, we evaluate the performance of the crafted AEs in defense and adversarial training scenarios. 
% Following \cite{Yuan2022Natural} and \cite{Long2022Frequency}, we consider both defenses and adversarially trained models. 
For simplicity, AEs are crafted for Res50. The success rates are reported in Tab. \ref{tab:defense}. It can be observed that the success rates of typical adversarial attacks exhibit varying degrees of improvement after incorporating our MDCS strategy. Specifically, MDCS-OPS achieves the best results, which validate the effectiveness of the proposed MDCS strategy in defense scenarios.

\begin{table*}\small
\captionsetup{font={normalsize}}
\caption{\normalsize{The success rates (\%) of adversarial attacks against defensed models and adversarially trained models.}}
\label{tab:defense}
\centering
\begin{tabular}{c|cccccccc}%
\toprule %[2pt]$
Attack  &  Inc-v3$_{ens3}$& Inc-v3$_{ens4}$& IncRes-v2$_{ens}$&  AT& JPEG& HGD&RS &Bit-Red\\	
\midrule %[2pt]
% I-FGSM & 16.7 & 8.7 & 10.3 & 8.9&& &&&\\
MI& 22.0&22.5&14.9& 40.7& 50.2& 17.4&27.5 &38.9\\
\bf{MDCS-MI}&23.7&24.2&16.3& 41.1& 52.2& 16.2&27.8 &42.6\\
MEF&73.6& 71.6& 66.3& 45.4& 86.2& 74.5&37.4 &81.2\\
\bf{MDCS-MEF}&74.5& 71.5&66.3& 45.5& 88.5& 74.6&37.2&84.7\\
OPS&96.8& 96.7& 95.3& 58.3& 98.5& 97.2&65.3&98.1\\
\bf{MDCS-OPS}&\bf{97.5}& \bf{97.5}& \bf{96.0}& \bf{58.5}& \bf{98.9}& \bf{98.0}& \bf{66.6}&\bf{99.0}\\
\hline
\end{tabular}
\end{table*}

\subsubsection{Ablation Study}

While both $\epsilon$ and $\gamma$ in MDCS-MI critically affect the performance, we focus here on ablating the perturbation budget $\epsilon$. A detailed analysis of $\gamma$ is deferred to \cref{app:8.1.3}.

Fig. \ref{fig:eps} presents a comparative analysis of two baseline attacks, MI-FGSM and OPS, against their MDCS-enhanced counterparts. As expected, the performance for all methods demonstrates a strong positive correlation with the magnitude of $\epsilon$. More critically, the integration of our MDCS module yields a consistent and significant performance enhancement across the entire range of $\epsilon$ values for both backbones. MDCS-MI consistently outperforms MI-FGSM, and similarly, MDCS-OPS maintains a clear advantage over OPS. 
% This performance gap is particularly highlighted in the inset, which provides a magnified view at $\epsilon = 16/255$, confirming the superiority of MDCS-OPS even when the attack success rate is near saturation. 
The inset at $\epsilon = 16/255$ highlights this superiority even near saturation. These results affirm that the effectiveness of our MDCS module is not confined to a specific perturbation budget but offers robust improvements, underscoring its general applicability and effectiveness in boosting adversarial attacks.

\begin{figure}[htbp]
\centering
\includegraphics[width=0.48\textwidth]{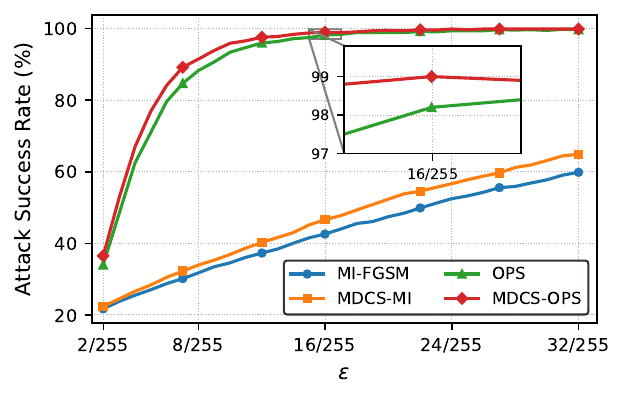}
\caption{Ablation study on the value of $\epsilon$. We employ Res50 as surrogate model and Inc-v3 as target model.}
\label{fig:eps}
\end{figure}

\begin{figure}[htbp]
\setlength{\belowcaptionskip}{-0.2cm}
\centering
\includegraphics[width=0.48\textwidth]{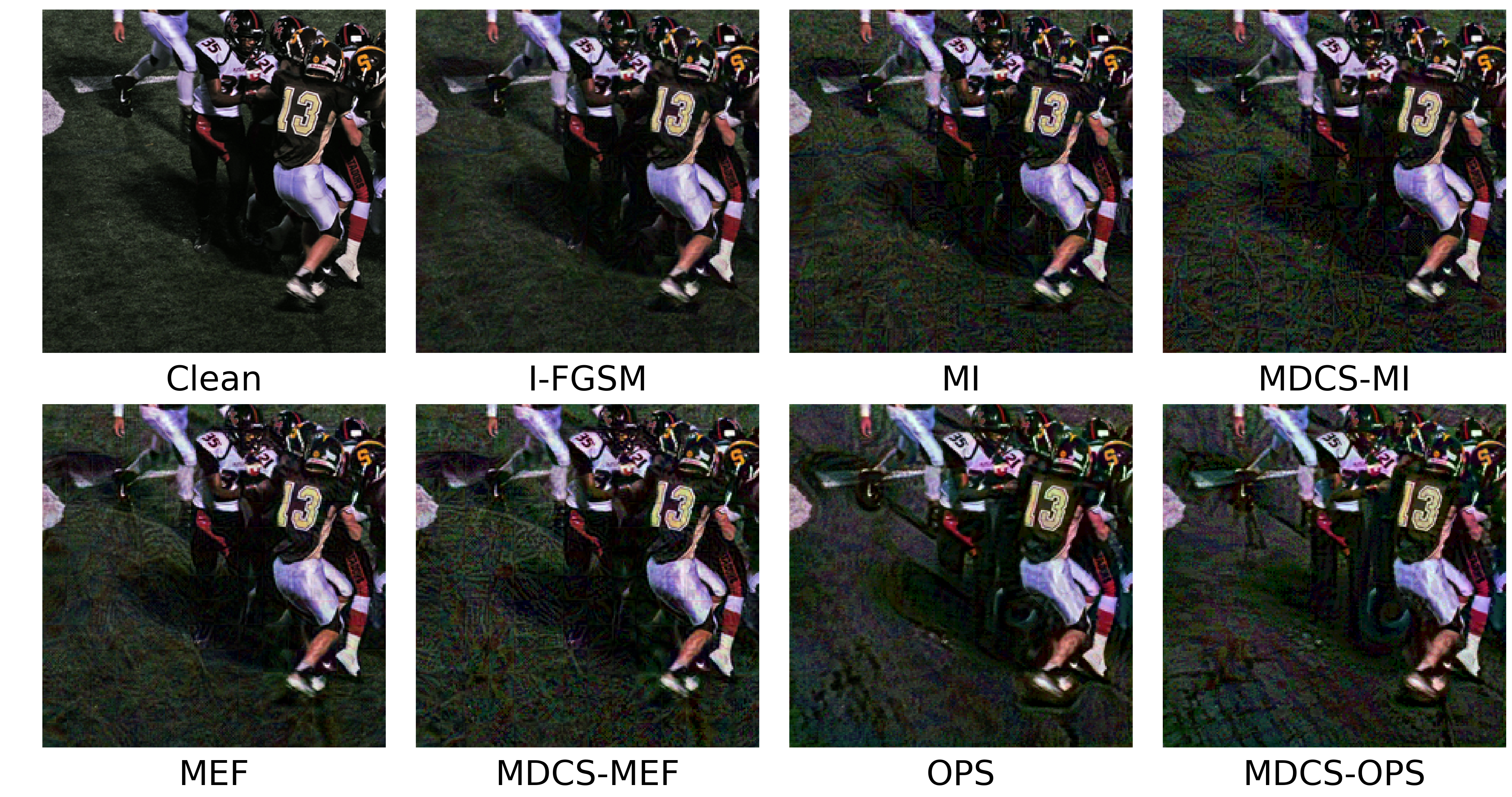}
\caption{Visualization of AEs on image classification tasks. AEs are crafted on ViT-B/16 with $\epsilon = 16/255$.}
\label{fig:vis_cv}
\end{figure}

\subsection{Evaluation on Cross-Modal Retrieval Tasks}

Image-Text cross-modal retrieval addresses the task of retrieving the most relevant top-ranked results from a gallery of one modality, given a query from a different modality \cite{wang2019camp,chen2020imram}. To demonstrate the broader applicability of our MDCS strategy, we further investigate its potential to enhance state-of-the-art multimodal attacks on retrieval tasks. 

Following \cite{gao2024dra,lu2023sga}, we focus on assessing the transferability of AEs across widely adopted VLM architectures: fused and aligned, including ALBEF, TCL, $\text{CLIP}_\text{CNN}$, $\text{CLIP}_\text{ViT}$. We leverage Flickr30K
% \footnote{{\url{https://huggingface.co/datasets/nlphuji/flickr30k}}} 
and MSCOCO
% \footnote{{\url{https://cocodataset.org/}}}
datasets \cite{young2014image, lin2014microsoft} for evaluating the cross-modal retrieval task. We adopt PGD, Sep-Attack (separate unimodal
attack), Co-Attack \cite{zhang2022towards}, SGA \cite{lu2023sga}, DRA \cite{gao2024dra}, SA-AET \cite{jia2025semantic} as our baselines. It can be noticed that these baseline frameworks still rely on conventional sign-based optimizers (e.g., PGD) for the underlying perturbation generation. We hypothesize that replacing the standard optimizer with our theoretically grounded MDCS can improve the stability of the optimization process within these frameworks, thereby unlocking greater black-box transferability. To validate this, we integrate our MDCS strategy into the SGA, DRA, and SA-AET frameworks separately, creating enhanced variants: MDCS-SGA, MDCS-DRA, and MDCS-SAAET.

\begin{table*}[htbp]
\captionsetup{font={normalsize}}
\caption{\normalsize{The adversarial attacks against multimodal models on Flickr30K dataset. The source model column shows VLMs we use to generate multimodal AEs. The gray area represents adversarial attacks under a white-box setting; the rest are black-box attacks. R@1 denotes the attack success rate at the top-1 rank. The adversarial perturbation is set to 8/255, with a per-step size of 2/255, and a total of 10 iterations. For both Image-Text Retrieval (TR) and Text-Image Retrieval (IR), we provide R@1 scores here. More results are listed in the Supplementary Material (\cref{app: retrieval}).}}
\label{tab:blackbox-vqa}
    \centering
    \small
    \begin{tabular}{c|c|cc|cc|cc|cc}
    \toprule
    \multirow{2}[4]{*}{Source} & \multirow{2}[4]{*}{Attack} & \multicolumn{2}{c}{ALBEF} & \multicolumn{2}{c}{TCL} & \multicolumn{2}{c}{CLIP$_{\text{ViT}}$} & \multicolumn{2}{c}{CLIP$_{\text{CNN}}$} \\
\cmidrule{3-10}      &   & TR R@1 & IR R@1 & TR R@1 & IR R@1 & TR R@1 & IR R@1 & TR R@1 & \multicolumn{1}{c}{IR R@1} \\
    \midrule
    \multirow{9}[2]{*}{ALBEF}& PGD& \cellcolor[rgb]{ .827,  .827,  .827}52.45& \cellcolor[rgb]{ .827,  .827,  .827}58.65& 3.06& 6.79& 8.96& 13.21& 10.34&14.65\\
 & Sep-Attack& \cellcolor[rgb]{ .827,  .827,  .827}65.69& \cellcolor[rgb]{ .827,  .827,  .827}73.95& 17.60& 32.95& 31.17& 45.21& 32.82&45.49\\
    & Co-Attack & \cellcolor[rgb]{ .827,  .827,  .827}97.08  & \cellcolor[rgb]{ .827,  .827,  .827}98.36  & 39.52  & 51.24  & 29.82  & 38.92  & 31.29  & 41.99  \\
      & SGA & \cellcolor[rgb]{ .827,  .827,  .827}99.79  & \cellcolor[rgb]{ .827,  .827,  .827}99.95  & 87.67  & 87.88  & 38.04  & 46.17  & 41.63  & 50.36  \\
      & \textbf{MDCS-SGA} & \cellcolor[rgb]{ .827,  .827,  .827}100.0  & \cellcolor[rgb]{ .827,  .827,  .827}99.98  & 91.78  & 91.24  & 41.35  & 49.71  & 45.08  & 53.93  \\
      & DRA & \cellcolor[rgb]{ .827,  .827,  .827}99.79  & \cellcolor[rgb]{ .827,  .827,  .827}99.91  & 89.78  & 90.52  & 46.63  & 57.28  & 50.32  & 59.11  \\
      & \textbf{MDCS-DRA} & \cellcolor[rgb]{ .827,  .827,  .827}99.90  & \cellcolor[rgb]{ .827,  .827,  .827}99.98  & 93.26  & 92.98  & 49.94  & 59.31  & 55.56  & 62.44  \\
      & SA-AET & \cellcolor[rgb]{ .827,  .827,  .827}99.90  & \cellcolor[rgb]{ .827,  .827,  .827}100.0  & 96.31  & 96.19  & 54.23  & 63.50  & 58.88  & 65.18  \\
      & \textbf{MDCS-SAAET} & \cellcolor[rgb]{ .827,  .827,  .827}99.90  & \cellcolor[rgb]{ .827,  .827,  .827}100.0  & \textbf{96.52 } & \textbf{96.71 } & \textbf{60.25 } & \textbf{67.01 } & \textbf{60.54 } & \textbf{67.89 } \\
    \midrule
    \multirow{9}[2]{*}{TCL}& PGD& 6.15& 10.78& \cellcolor[rgb]{ .827,  .827,  .827}77.87& \cellcolor[rgb]{ .827,  .827,  .827}79.84& 10.18& 16.31& 14.81& 21.11\\
 & Sep-Attack& 20.13& 36.48& \cellcolor[rgb]{ .827,  .827,  .827}84.72& \cellcolor[rgb]{ .827,  .827,  .827}86.10& 31.29& 44.65& 33.33&45.80\\
 & Co-Attack & 49.84  & 60.36  & \cellcolor[rgb]{ .827,  .827,  .827}91.68  & \cellcolor[rgb]{ .827,  .827,  .827}95.48  & 32.64  & 42.69  & 32.06  &47.82  \\
      & SGA & 93.33  & 92.84  & \cellcolor[rgb]{ .827,  .827,  .827}100.0  & \cellcolor[rgb]{ .827,  .827,  .827}100.0  & 37.42  & 46.39  & 42.02  & 51.36  \\
      & \textbf{MDCS-SGA} & 95.10  & 94.69  & \cellcolor[rgb]{ .827,  .827,  .827}100.0  & \cellcolor[rgb]{ .827,  .827,  .827}100.0  & 42.45  & 49.19  & 46.87  & 55.71  \\
      & DRA & 95.31  & 95.35  & \cellcolor[rgb]{ .827,  .827,  .827}100.0  & \cellcolor[rgb]{ .827,  .827,  .827}100.0  & 46.26  & 56.80  & 50.70  & 61.54  \\
      & \textbf{MDCS-DRA} & 96.66  & 96.42  & \cellcolor[rgb]{ .827,  .827,  .827}100.0  & \cellcolor[rgb]{ .827,  .827,  .827}100.0  & 50.92  & 59.76  & 56.19  & 65.01  \\
      & SA-AET & 98.85  & 98.48  & \cellcolor[rgb]{ .827,  .827,  .827}100.0  & \cellcolor[rgb]{ .827,  .827,  .827}99.98  & 53.99  & 63.27  & 59.64  & 68.44  \\
      & \textbf{MDCS-SAAET} & \textbf{99.17 } & \textbf{98.88 } & \cellcolor[rgb]{ .827,  .827,  .827}100.0  & \cellcolor[rgb]{ .827,  .827,  .827}100.0  & \textbf{57.79 } & \textbf{65.98 } & \textbf{62.71 } & \textbf{71.01 } \\
    \midrule
    \multirow{9}[2]{*}{CLIP$_{\text{ViT}}$}& PGD& 3.13& 6.48& 4.85& 8.17& \cellcolor[rgb]{ .827,  .827,  .827}69.33& \cellcolor[rgb]{ .827,  .827,  .827}84.79& 13.03& 17.43\\
 & Sep-Attack& 7.61& 20.58& 10.12& 20.20& \cellcolor[rgb]{ .827,  .827,  .827}76.92& \cellcolor[rgb]{ .827,  .827,  .827}87.44& 29.89&38.32\\
 & Co-Attack & 8.55  & 20.18  & 10.01  & 21.29  & \cellcolor[rgb]{ .827,  .827,  .827}78.5  & \cellcolor[rgb]{ .827,  .827,  .827}87.50  & 29.50  &38.49  \\
      & SGA & 21.58  & 34.89  & 24.66  & 35.83  & \cellcolor[rgb]{ .827,  .827,  .827}100.0  & \cellcolor[rgb]{ .827,  .827,  .827}100.0  & 52.49  & 60.38  \\
      & \textbf{MDCS-SGA} & 29.30  & 40.69  & 30.56  & 39.81  & \cellcolor[rgb]{ .827,  .827,  .827}100.0  & \cellcolor[rgb]{ .827,  .827,  .827}100.0  & 57.60  & 66.00  \\
      & DRA & 27.95  & 43.29  & 29.08  & 44.83  & \cellcolor[rgb]{ .827,  .827,  .827}100.0  & \cellcolor[rgb]{ .827,  .827,  .827}100.0  & 62.45  & 69.47  \\
      & \textbf{MDCS-DRA} & 34.41  & 48.15  & 35.51  & 48.40  & \cellcolor[rgb]{ .827,  .827,  .827}100.0  & \cellcolor[rgb]{ .827,  .827,  .827}100.0  & 68.45  & 73.41  \\
      & SA-AET & 35.97  & 50.28  & 37.93  & 51.36  & \cellcolor[rgb]{ .827,  .827,  .827}100.0  & \cellcolor[rgb]{ .827,  .827,  .827}100.0  & 69.09  & 74.00  \\
      & \textbf{MDCS-SAAET} & \textbf{42.34 } & \textbf{54.86 } & \textbf{43.52 } & \textbf{55.45 } & \cellcolor[rgb]{ .827,  .827,  .827}100.0  & \cellcolor[rgb]{ .827,  .827,  .827}100.0  & \textbf{73.18 } & \textbf{77.53 } \\
    \midrule
    \multirow{9}[2]{*}{CLIP$_{\text{CNN}}$}& PGD& 2.29& 6.15& 4.53& 8.88& 5.41& 12.08& \cellcolor[rgb]{ .827,  .827,  .827}89.78& \cellcolor[rgb]{ .827,  .827,  .827}92.25\\
 & Sep-Attack& 9.38& 22.99& 11.28& 25.45& 26.13& 39.24& \cellcolor[rgb]{ .827,  .827,  .827}93.61&\cellcolor[rgb]{ .827,  .827,  .827}95.31\\
 & Co-Attack & 10.53  & 23.62  & 12.54  & 26.05  & 27.24  & 40.62  & \cellcolor[rgb]{ .827,  .827,  .827}95.91  &\cellcolor[rgb]{ .827,  .827,  .827}96.50  \\
      & SGA & 15.02  & 28.60  & 18.34  & 32.26  & 39.51  & 51.16  & \cellcolor[rgb]{ .827,  .827,  .827}100.0  & \cellcolor[rgb]{ .827,  .827,  .827}100.0  \\
      & \textbf{MDCS-SGA} & 19.19  & 33.44  & 23.92  & 36.48  & 45.89  & 56.22  & \cellcolor[rgb]{ .827,  .827,  .827}99.87  & \cellcolor[rgb]{ .827,  .827,  .827}100.0  \\
      & DRA & 19.08  & 33.96  & 22.02  & 37.45  & 48.34  & 59.02  & \cellcolor[rgb]{ .827,  .827,  .827}100.0  & \cellcolor[rgb]{ .827,  .827,  .827}99.93  \\
      & \textbf{MDCS-DRA} & 23.04  & 39.43  & 25.82  & 41.05  & 55.83  & 63.89  & \cellcolor[rgb]{ .827,  .827,  .827}100.0  & \cellcolor[rgb]{ .827,  .827,  .827}100.0  \\
      & SA-AET & 23.88  & 37.89  & 25.18  & 41.69  & 54.60  & 63.14  & \cellcolor[rgb]{ .827,  .827,  .827}100.0  & \cellcolor[rgb]{ .827,  .827,  .827}99.97  \\
      & \textbf{MDCS-SAAET} & \textbf{30.76 } & \textbf{45.02 } & \textbf{33.40 } & \textbf{47.57 } & \textbf{61.72 } & \textbf{69.56 } & \cellcolor[rgb]{ .827,  .827,  .827}100.0  & \cellcolor[rgb]{ .827,  .827,  .827}99.97  \\
    \bottomrule
    \end{tabular}
\end{table*}

As shown in Tab. \ref{tab:blackbox-vqa}, integrating our MDCS consistently and significantly boosts the success rates of state-of-the-art attacks (SGA, DRA, SA-AET) across all evaluated models. We summarize three key findings: (1) MDCS is particularly effective against CLIP-based models. For instance, in the challenging cross-architecture transfer from $\text{CLIP}_{\text{CNN}}$ to TCL, it elevates the IR R@1 of SA-AET by 8.22\% (from 25.18\% to 33.40\%). (2) The performance gains on TR are are often more pronounced than on IR. When transferring from $\text{CLIP}_{\text{ViT}}$ to ALBEF, MDCS brings a +6.37\% improvement in TR versus +4.58\% in IR. (3) The benefits of MDCS are complementary to  state-of-the-art attacks. While SA-AET is already a strong baseline, MDCS-SAAET further pushes the TR R@1 from 54.23\% to 60.25\% against $\text{CLIP}_{\text{ViT}}$. These results collectively affirm MDCS as a highly effective, model-agnostic, and plug-and-play module for boosting adversarial transferability.

We also provide a visualization of our generated AEs on retrieval tasks in Fig. \ref{fig:vis}. As shown, the generated adversarial perturbations by MDCS-SAAET are imperceptible.

\begin{figure}[htbp]
\centering
\includegraphics[width=0.48\textwidth]{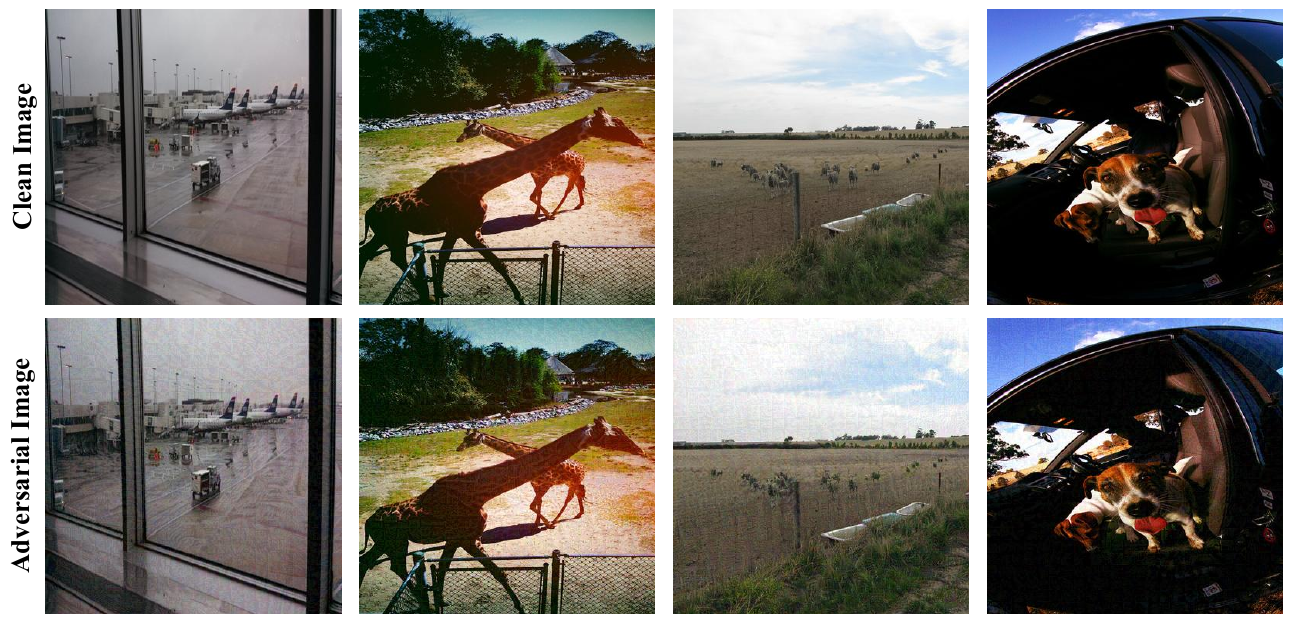}
\caption{Visualization of AEs on image-text retrieval tasks with MDCS-SAAET. AEs are crafted on ALBEF and are constrained by an $L_{\infty}$ norm of 8/255, generated over 10 iterations with a step size of 2/255.}
\label{fig:vis}
\end{figure}

\section{Conclusion}

This paper addresses the issues of non-convergence and instability in sign-based adversarial attacks. From an optimization perspective, we develop a series of novel optimizers that incorporate the MDCS strategy to improve both stability and transferability. Specifically, we prove that MI-FGSM equipped with MDCS achieves the optimal convergence rate. Experimental results on image classification and cross-modal retrieval tasks demonstrate that MDCS serves as a universal strategy, seamlessly integrating with existing sign-based attacks and significantly enhancing their transferability and stability.

\textbf{Acknowledgments.} This work was supported in part by the National Natural Science Foundation of China (62576351) and the China Postdoctoral Science Foundation (2024M764294).

{
    \small
    \bibliographystyle{ieeenat_fullname}
    \bibliography{main}
}
\clearpage
\setcounter{page}{1}
\maketitlesupplementary

\section{Convergence Analysis of MDCS-MI}
\label{app:appendix}

\begin{lemma}
\label{lem:lemma3.2}
\textup{\cite{mukkamala2017variants}}   $\forall \bm{x}_1, \bm{x}_2 \in \mathbb{R}^{d}$, we have
\begin{equation}
\|P_{\mathbf{Q},\bm{D}_{t}^{-1}}(\boldsymbol{x}_1)-P_{\mathbf{Q},\bm{D}_{t}^{-1}}(\boldsymbol{x}_2) \|_{\bm{D}_{t}^{-1}} \leq \| \boldsymbol{x}_1 - \boldsymbol{x}_2 \|_{\bm{D}_{t}^{-1}}.
\end{equation}
and 
\begin{equation}
\bm{x}^{*} = P_{\mathbf{Q},\bm{D}_{t}^{-1}}(\bm{x}^{*}).
\end{equation}
\end{lemma}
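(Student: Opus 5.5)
The plan is to prove the two claims of \cref{lem:lemma3.2} separately, using only that $\bm{D}_t$ is a positive diagonal matrix (each $d_{t,i} > 0$ because $d_{t,i} = \min(1/|m_{t+1,i}|, d_{t-1,i})$ and $d_{0,i}=1$) and that $\mathbf{Q}$ is a closed convex set (a box). The weighted inner product $\langle \bm{u},\bm{v}\rangle_{\bm{D}_t^{-1}} = \bm{u}^\top \bm{D}_t^{-1}\bm{v}$ is then a genuine inner product whose induced norm is $\|\cdot\|_{\bm{D}_t^{-1}}$. Consequently $P_{\mathbf{Q},\bm{D}_t^{-1}}$ is simply the Euclidean (Hilbert-space) projection onto $\mathbf{Q}$ in the space $(\mathbb{R}^d,\langle\cdot,\cdot\rangle_{\bm{D}_t^{-1}})$. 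Both statements are then standard projection facts.

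For nonexpansiveness, I would first derive the variational inequality characterizing the projection. Let $\bm{p}=P_{\mathbf{Q},\bm{D}_t^{-1}}(\bm{x})$. For any $\bm{z}\in\mathbf{Q}$, convexity gives $\bm{p}+s(\bm{z}-\bm{p})\in\mathbf{Q}$ for $s\in[0,1]$. Minimality of $\|\bm{x}-\cdot\|^2_{\bm{D}_t^{-1}}$ at $s=0$ then yields
\[
\langle \bm{x}-\bm{p},\ \bm{z}-\bm{p}\rangle_{\bm{D}_t^{-1}}\le 0 .
\]
Existence and uniqueness of $\bm{p}$ follow from strict convexity and coercivity of the weighted norm on a closed convex set.

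Write $\bm{p}_1,\bm{p}_2$ for the projections of $\bm{x}_1,\bm{x}_2$. Apply the inequality with $(\bm{x},\bm{z})=(\bm{x}_1,\bm{p}_2)$ and with $(\bm{x}_2,\bm{p}_1)$, and add the two. This gives
\[
\|\bm{p}_1-\bm{p}_2\|^2_{\bm{D}_t^{-1}}\le\langle \bm{x}_1-\bm{x}_2,\ \bm{p}_1-\bm{p}_2\rangle_{\bm{D}_t^{-1}} .
\]
Cauchy--Schwarz in the weighted inner product then gives the claimed bound after dividing by $\|\bm{p}_1-\bm{p}_2\|_{\bm{D}_t^{-1}}$; the case where this quantity is zero is trivial.

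For the second claim, $\bm{x}^*$ is a solution of problem (\ref{adv-optimization}), so $\bm{x}^*\in\mathbf{Q}$. Then $\|\bm{x}^*-\bm{z}\|_{\bm{D}_t^{-1}}$ attains its minimum value $0$ at $\bm{z}=\bm{x}^*$, and by uniqueness $P_{\mathbf{Q},\bm{D}_t^{-1}}(\bm{x}^*)=\bm{x}^*$.

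The only real subtlety is notational. The paper writes the projection as a $\min$ rather than an $\arg\min$, and it should be read as the minimizer. I would also make explicit the positivity of $\bm{D}_t$ (so that $\bm{D}_t^{-1}$ is defined and positive definite) and the closedness and convexity of $\mathbf{Q}$. The statement ``$\forall \bm{x}_1,\bm{x}_2$'' needs nothing else. Since $\bm{D}_t^{-1}$ is diagonal and $\mathbf{Q}$ is a box, the projection even decouples coordinatewise into a clip, so one could alternatively verify everything coordinate by coordinate. I would mention this only as a sanity check.
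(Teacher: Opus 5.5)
Your proof is correct and is essentially the argument behind the lemma. The paper gives no proof of its own and simply cites \cite{mukkamala2017variants}, where this standard fact (nonexpansiveness of the projection in the inner product induced by a positive definite matrix, together with $P_{\mathbf{Q},\bm{D}_t^{-1}}(\bm{x}^{*})=\bm{x}^{*}$ for $\bm{x}^{*}\in\mathbf{Q}$) rests on the same two-variational-inequalities-plus-Cauchy--Schwarz reasoning. Your explicit checks fill in hypotheses the paper leaves implicit: that $d_{t,i}>0$, that $\mathbf{Q}$ is a closed convex box, and that the projection is to be read as an $\arg\min$. Your coordinatewise-clip remark is also right, and it shows the weighted projection is a plain clip, without the $\bm{D}_t^{-1/2}$ factor in the paper's equation (\ref{proj}).
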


\begin{lemma}
\label{lem:usefullemma1}
Let $1>\lambda>0$ and $\beta>0$. Let $\beta_t=\beta \lambda^{t-1}$.
Suppose $\{\bm{m}_{t+1}\}_{t=1}^{\infty}$ is generated by MDCS-MI. Then there exits a $M_2>1$ such that
$$\|\bm{m
}_{t+1}\| \leq M_2,\ \forall t>0,$$
\end{lemma}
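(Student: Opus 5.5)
The plan is to unroll the momentum recursion and bound it by a convergent geometric series. The update rule gives $\bm{m}_1=\bm{0}$ and $\bm{m}_{t+1}=\beta_t\bm{m}_t+\bm{g}_t$, with the normalized gradient
\[
\bm{g}_t=\frac{\nabla_{\bm{x}} J(\bm{x}_t^{adv})}{\|\nabla_{\bm{x}} J(\bm{x}_t^{adv})\|_1}.
\]
Normalization gives $\|\bm{g}_t\|_1=1$ whenever the gradient is nonzero; a zero gradient contributes the zero vector. Since $\|\cdot\|\le\|\cdot\|_1$, we get $\|\bm{g}_t\|\le 1$ for every $t$. This bound holds without Assumption \ref{ass:gfinite}, because the normalization removes the magnitude.

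Unrolling the recursion gives
\[
\bm{m}_{t+1}=\sum_{k=1}^{t}\Big(\prod_{j=k+1}^{t}\beta_j\Big)\bm{g}_k .
\]
The triangle inequality then yields $\|\bm{m}_{t+1}\|\le\sum_{k=1}^{t}\prod_{j=k+1}^{t}\beta_j$. The product has $t-k$ factors, each $\beta_j=\beta\lambda^{j-1}$.

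The only real obstacle is that the lemma assumes only $\beta>0$, not $\beta<1$. If $\beta\le 1$, every $\beta_j\le\lambda^{j-1}\le 1$, which is too weak on its own. A cleaner route uses the decay of $\lambda^{j-1}$ directly. Write
\[
\prod_{j=k+1}^{t}\beta_j=\beta^{t-k}\lambda^{\sum_{j=k+1}^{t}(j-1)} .
\]
Setting $n=t-k$, the exponent of $\lambda$ is at least $n(n-1)/2$. Hence $\prod_{j=k+1}^{t}\beta_j\le \beta^{n}\lambda^{n(n-1)/2}$, and
\[
\|\bm{m}_{t+1}\|\le\sum_{n=0}^{\infty}\beta^{n}\lambda^{n(n-1)/2}=:S .
\]
This series converges for any $\beta>0$ and $0<\lambda<1$, for example by the ratio test: the ratio of consecutive terms is $\beta\lambda^{n}\to0$.

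In the theorem's regime $0<\beta<1$ the bound simplifies to $S\le\sum_n\beta^n=1/(1-\beta)$. Finally, set $M_2=\max\{S,2\}$ so that $M_2>1$ as required; note $S\ge1$ already from its $n=0$ term. This $M_2$ is independent of $t$, which proves the claim.
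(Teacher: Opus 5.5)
Your proof is correct and uses the same skeleton as the paper: unroll the momentum recursion, use $\|\cdot\|\le\|\cdot\|_1$ so each normalized gradient has norm at most $1$, and dominate by a convergent series. The difference is how you bound the products of momentum coefficients. The paper iterates $\|\bm{m}_{t+1}\|\le\beta_t\|\bm{m}_t\|+1$ and at each stage silently replaces $\beta_t\beta_{t-1}\|\bm{m}_{t-1}\|$ by $\beta_{t-1}\|\bm{m}_{t-1}\|$. In your notation this is the estimate $\prod_{j=k+1}^{t}\beta_j\le\beta_{k+1}$, which gives $M_2=1+\sum_{i\ge1}\beta_i=1+\beta/(1-\lambda)$. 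That step needs $\beta_j\le 1$. This is not among the lemma's stated hypotheses, but it is supplied by the input condition $0<\beta_t\le 1$ of Algorithm~\ref{alg:MDCS}. So your remark that $\beta_j\le 1$ is ``too weak on its own'' undersells it: keeping just one factor of each product already gives a summable bound. That is exactly the paper's argument, which is shorter and yields a closed-form constant. What your estimate $\prod_{j=k+1}^{t}\beta_j\le\beta^{n}\lambda^{n(n-1)/2}$ buys is generality: it proves the lemma exactly as stated, for every $\beta>0$. For $\beta>1$ the paper's dropped-factor step is false, and its constant $1+\beta/(1-\lambda)$ can genuinely be exceeded. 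For example, $\beta=100$, $\lambda=1/2$ and constant normalized gradient $\bm{e}_1$ give $\|\bm{m}_4\|=1276>201$. A term-by-term comparison, $\beta^n\lambda^{n(n-1)/2}\le\beta\lambda^{n-1}$ for $n\ge1$, also shows your $S$ never exceeds the paper's constant when $\beta\le 1$. Either constant suffices for Theorem~\ref{thm:bigtheorem}, where only the finiteness of $M_2$ matters.
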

{\it Proof} \ Note
$$\| \nabla_{\boldsymbol{x}}J(\boldsymbol{x}^{adv})\| \leq \| \nabla_{\boldsymbol{x}}J(\boldsymbol{x}^{adv})\|_{1}.$$ 
Then
$$ \|\bm{m}_{t+1}\|\leq \beta_t \|\bm{m}_{t}\|+1  \leq  \beta_{t-1} \|\bm{m}_{t-1}\|+ \beta_{t}+1 \leq \sum_{i=1}^{t} \beta_i +1.$$
Let $M_2=\sum_{i=1}^{\infty} \beta_i+1$, Lemma \ref{lem:usefullemma1} follows from the convergence of $\sum_{i=1}^{t} \beta_i$.\\

Note $$d_{t,i} = \mathrm{min} (\frac{1}{|\bm{m}_{t+1,i}| }, d_{t-1,i} ).$$
According to Lemma \ref{lem:usefullemma1}, we can get
\begin{lemma}
\label{lem:usefullemma2}
Let $1>\lambda>0$ and $\beta>0$. Let $\beta_t=\beta \lambda^{t-1}$.
Suppose $\{\bm{D}_{t}\}_{t=1}^{\infty}$ is generated by MDCS-MI. Then for each $t$ and $i$,
$$\frac{1}{M_2}\leq d_{t,i}\leq 1.$$
\end{lemma}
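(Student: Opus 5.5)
The plan is a direct induction on $t$, with each coordinate $i$ handled separately. The only inputs are the update rule $d_{t,i} = \min\bigl(1/|\bm{m}_{t+1,i}|,\, d_{t-1,i}\bigr)$, the initialization $d_{0,i}=1$, and the uniform bound $\|\bm{m}_{t+1}\|\le M_2$ with $M_2>1$ from Lemma~\ref{lem:usefullemma1}.

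\emph{Upper bound.} Taking a minimum gives $d_{t,i}\le d_{t-1,i}$ for every $t\ge 1$. Iterating back to $d_{0,i}=1$ yields $d_{t,i}\le d_{0,i}=1$. This also records the monotone decrease of the coordinate-wise step-sizes, which is the MDCS property itself.

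\emph{Lower bound.}
\begin{itemize}
\item Base case: since $M_2>1$, we have $d_{0,i}=1\ge 1/M_2$.
\item Coordinate bound: for any $t\ge 1$, each coordinate is dominated by the Euclidean norm, so $|\bm{m}_{t+1,i}|\le\|\bm{m}_{t+1}\|\le M_2$ by Lemma~\ref{lem:usefullemma1}. Hence $1/|\bm{m}_{t+1,i}|\ge 1/M_2$.
\item Inductive step: assume $d_{t-1,i}\ge 1/M_2$. Then both arguments of the minimum are at least $1/M_2$, so $d_{t,i}\ge 1/M_2$.
\end{itemize}

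\emph{The case $\bm{m}_{t+1,i}=0$.} Here $1/|\bm{m}_{t+1,i}|$ is undefined. I would adopt the natural convention $1/0=+\infty$, so that the update simply keeps $d_{t,i}=d_{t-1,i}$. Both bounds then carry over unchanged from the previous step.

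I do not expect a real obstacle. The only point needing care is that Lemma~\ref{lem:usefullemma1} really gives a bound uniform in $t$. This holds because $\sum_i\beta\lambda^{i-1}<\infty$ for $0<\lambda<1$, so $M_2$ is a fixed constant.
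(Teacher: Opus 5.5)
Your proof is correct and follows essentially the same route as the paper, which derives the bound directly from the update $d_{t,i}=\min(1/|\bm{m}_{t+1,i}|,\, d_{t-1,i})$, the initialization $d_{0,i}=1$, and Lemma~\ref{lem:usefullemma1}. You make explicit what the paper leaves implicit: the induction, the coordinate bound $|\bm{m}_{t+1,i}|\le\|\bm{m}_{t+1}\|\le M_2$, the base case via $M_2>1$, and the convention $1/0=+\infty$ for vanishing coordinates.
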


{\it Proof of Theorem \ref{thm:bigtheorem}}

From Lemma \ref{lem:lemma3.2}, we know
\begin{equation*}
   \begin{aligned}
   & \|\bm{x}_{t+1}^{adv}-\boldsymbol{x}^{\ast}\|^2_{\bm{D}_{t}^{-1}} \\
    &= \|P_{\mathbf{Q},\bm{D}_{t}^{-1}} (\bm{x}_{t}^{adv}+\alpha_t {\bm{D}_{t}} \boldsymbol{m}_{t+1}) -P_{\mathbf{Q},\bm{D}_{t}^{-1}}(\boldsymbol{x}^{\ast})\|^2_{\bm{D}_{t}^{-1}} \\
     &\leq \|\bm{x}_{t}^{adv}+\alpha_t {\bm{D}_{t}} \boldsymbol{m}_{t+1} -\boldsymbol{x}^{\ast}\|^2_{\bm{D}_{t}^{-1}} \\
   & = \|\bm{x}_{t}^{adv}-\boldsymbol{x}^{\ast}\|^2_{\bm{D}_{t}^{-1}} + \| \alpha_t {\bm{D}_{t}} \boldsymbol{m}_{t+1} \|^2_{\bm{D}_{t}^{-1}} \\
   & \ \ \ \ +2\alpha_t \langle \boldsymbol{m}_{t+1} , \bm{x}_{t}^{adv}-\boldsymbol{x}^{\ast}\rangle \\
   & = \|\bm{x}_{t}^{adv}-\boldsymbol{x}^{\ast}\|^2_{\bm{D}_{t}^{-1}} + \| \alpha_t {\bm{D}_{t}} \boldsymbol{m}_{t+1} \|^2_{\bm{D}_{t}^{-1}}  \\
   & \ \ \ \ + 2\alpha_t\langle \beta_t \boldsymbol{m}_{t}+ \displaystyle\frac{\nabla_{\bm{x}} J(\bm{x}_{t}^{adv})}{\|\nabla_{\bm{x}} J(\bm{x}_{t}^{adv})\|_1}, \bm{x}_{t}^{adv}-\boldsymbol{x}^{\ast}\rangle.
   \end{aligned}
 \end{equation*}
 Rearrange the inequality, we have
  \begin{equation*}
  \begin{aligned}
   &\frac{2\alpha_t}{\|\nabla_{\bm{x}} J(\bm{x}_{t}^{adv})\|_1} \langle\nabla_{\bm{x}} J(\bm{x}_{t}^{adv}), \bm{x}_{t}^{adv}-\boldsymbol{x}^{\ast}\rangle \\
   & \geq  \|\bm{x}_{t+1}^{adv}-\boldsymbol{x}^{\ast}\|^2_{\bm{D}_{t}^{-1}} -\|\bm{x}_{t}^{adv}-\boldsymbol{x}^{\ast}\|^2_{\bm{D}_{t}^{-1}} \\
   & \ \ \ \ - \| \alpha_t {\bm{D}_{t}} \boldsymbol{m}_{t+1} \|^2_{\bm{D}_{t}^{-1}} 
   - 2\alpha_t \beta_t \langle\boldsymbol{m}_{t} , \bm{x}_{t}^{adv}-\boldsymbol{x}^{\ast}\rangle,
   \end{aligned}
 \end{equation*}
 i.e.,
  \begin{equation*}
  \begin{aligned}
 & \frac{1}{\|\nabla_{\bm{x}} J(\bm{x}_{t}^{adv})\|_1} \langle\nabla_{\bm{x}} J(\bm{x}_{t}^{adv}), \bm{x}_{t}^{adv}-\boldsymbol{x}^{\ast}\rangle\\
 & \geq  \frac{\|\bm{x}_{t+1}^{adv}-\boldsymbol{x}^{\ast}\|^2_{\bm{D}_{t}^{-1}} -\|\bm{x}_{t}^{adv}-\boldsymbol{x}^{\ast}\|^2_{\bm{D}_{t}^{-1}}}{2\alpha_t } - \frac{\alpha_t \|\boldsymbol{m}_{t+1} \|^2}{2} \\
 & \ \ \ \ -\beta_t \langle\boldsymbol{m}_{t} , \bm{x}_{t}^{adv}-\boldsymbol{x}^{\ast}\rangle \\
 & \geq  \frac{\|\bm{x}_{t+1}^{adv}-\boldsymbol{x}^{\ast}\|^2_{\bm{D}_{t}^{-1}} -\|\bm{x}_{t}^{adv}-\boldsymbol{x}^{\ast}\|^2_{\bm{D}_{t}^{-1}}}{2\alpha_t } - \frac{\alpha_t \|\boldsymbol{m}_{t+1} \|^2}{2}  \\
 & \ \ \ \ - \frac{\beta_t \alpha_t \| \boldsymbol{m}_{t} \|^2}{2 } - \frac{ \beta_t \|\bm{x}_{t}^{adv}-\boldsymbol{x}^{\ast}\|^2}{2 \alpha_t}. \\
 \end{aligned}
 \end{equation*}
 Using the property of concave functions,
 \begin{equation*}
   \langle\nabla_{\bm{x}} J(\bm{x}_{t}^{adv}), \bm{x}_{t}^{adv}-\boldsymbol{x}^{\ast}\rangle
  \leq J(\boldsymbol{x}^{adv}_{t})-J(\boldsymbol{x}^{\ast}).
 \end{equation*}
 Then
\begin{equation*}
\begin{aligned}
& \frac{J(\boldsymbol{x}^{\ast})-J(\boldsymbol{x}^{adv}_{t})}{M} \\
& \leq \frac{\|\bm{x}_{t}^{adv}-\boldsymbol{x}^{\ast}\|^2_{\bm{D}_{t}^{-1}}- \|\bm{x}_{t+1}^{adv}-\boldsymbol{x}^{\ast}\|^2_{\bm{D}_{t}^{-1}}} {2\alpha_t } \\
& \ \ \ \ + \frac{\alpha_t \|\boldsymbol{m}_{t+1} \|^2}{2} + \frac{\beta_t \alpha_t\| \boldsymbol{m_{t}} \|^2}{2 } + \frac{ \beta_t \|\bm{x}_{t}^{adv}-\boldsymbol{x}^{\ast}\|^2}{2 \alpha_t}. \\
\end{aligned}
\end{equation*}
Summing this inequality from $t = 1$ to $T$, we obtain
\begin{equation*}
\begin{aligned}
& \frac{1}{M}\sum_{t=1}^{T}\left[J(\boldsymbol{x}^{\ast})-J(\boldsymbol{x}^{adv}_{t})\right] \\
&  \leq\underbrace{\sum_{t=1}^{T}\frac{\|\bm{x}_{t}^{adv}-\boldsymbol{x}^{\ast}\|^2_{\bm{D}_{t}^{-1}} - \|\bm{x}_{t+1}^{adv}-\boldsymbol{x}^{\ast}\|^2_{\bm{D}_{t}^{-1}} }{2\alpha_t }}_{P_1} \\
& \ \ \ \ +\underbrace{\sum_{t=1}^{T}\frac{ \beta_t \|\bm{x}_{t}^{adv}-\boldsymbol{x}^{\ast}\|^2}{2 \alpha_t}}_{P_2}\\
&\ \ \ \ +\underbrace{\sum_{t=1}^{T}\frac{\alpha_t \|\boldsymbol{m}_{t+1} \|^2}{2}+\sum_{t=1}^{T}\frac{\beta_t \alpha_t\| \boldsymbol{m}_{t} \|^2}{2 }.}_{P_3}
\end{aligned}
\end{equation*}

To bound $P_1$, we have
\begin{equation}\label{c1}
\begin{aligned}
     P_1=&\sum_{t=1}^{T}\frac{\|\bm{x}_{t}^{adv}-\boldsymbol{x}^{\ast}\|^2_{\bm{D}_{t}^{-1}} - \|\bm{x}_{t+1}^{adv}-\boldsymbol{x}^{\ast}\|^2_{\bm{D}_{t}^{-1}} }{2\alpha_t }\\
     =&\sum_{t=2}^{T}\left(\frac{\|\boldsymbol{x}_{t}^{adv}-\boldsymbol{x}^{\ast}\|^2_{\bm{D}_{t}^{-1}}}{2\alpha_t}-\frac{\|\boldsymbol{x}_{t}^{adv}-\boldsymbol{x}^{\ast}\|^2_{\bm{D}_{t-1}^{-1}}}{2\alpha_{t-1}}\right) \\ 
     & + \frac{\|\boldsymbol{x}_{1}^{adv}-\boldsymbol{x}^{\ast}\|^2_{\bm{D}_{1}^{-1}}}{2\alpha_1}
     -\frac{\|\boldsymbol{x}_{T+1}^{adv}-\boldsymbol{x}^{\ast}\|^2_{\bm{D}_{T}^{-1}}}{2\alpha_T }\\
    \end{aligned}
\end{equation}
According to Lemma \ref{lem:usefullemma2},
\begin{equation}\label{c}
\begin{aligned}
     P_1
     \leq & \sum_{t=2}^{T}\sum_{i=1}^{d}\left(\frac{1}{2\alpha_t d_{t,i}}-\frac{1}{2\alpha_{t-1}d_{t-1,i}}\right)|\boldsymbol{x}_{t,i}^{adv}-\boldsymbol{x_i}^{\ast}|^2 \\ 
     & + \frac{\|\boldsymbol{x}_{1}^{adv}-\boldsymbol{x}^{\ast}\|^2_{\bm{D}_{1}^{-1}}}{2\alpha_1} \\
     \leq & \sum_{i=1}^{d}\frac{1}{2\alpha_T d_{T,i}}G^2\\
     \leq & \frac{d M_2 G^2\sqrt{T}}{2\gamma }.
\end{aligned}
\end{equation}

To bound $P_2$, we have
\begin{equation}\label{d}
\begin{aligned}
P_2 & =\sum_{t=1}^{T}\frac{ \beta_t \|\bm{x}_{t}^{adv}-\boldsymbol{x}^{\ast}\|^2}{2 \alpha_t} \\
    & \leq\frac{{\beta G^2}}{2\gamma}\sum_{t=1}^{T} \lambda^{t-1} \sqrt{t} \\
    & \leq \frac{{\beta G^2}}{2\gamma}\sum_{t=1}^{T} \lambda^{t-1} t \\
    & \leq \frac{{\beta G^2}}{2\gamma (1-\lambda)^2}
\end{aligned}
\end{equation}

To bound $P_3$, according to Lemma \ref{lem:lemma3.2}, we have
  \begin{equation}\label{e}
   \begin{aligned}
     P_3&=\sum_{t=1}^{T}\frac{\alpha_t \|\boldsymbol{m}_{t+1} \|^2}{2}+\sum_{t=1}^{T}\frac{\beta_t \alpha_t\| \boldsymbol{m}_{t} \|^2}{2 } \\
     & \leq \sum_{t=1}^{T} \frac{\alpha_t {M
     _{2}}^2}{2}+\sum_{t=1}^{T}\frac{\beta_t \alpha_t {M_{2}}^2}{2 }\\
     &\leq \frac{M_{2}^2}{2} \sum_{t=1}^{T} \frac{\gamma}{\sqrt t} +\frac{D_{2}^2}{2} \sum_{t=1}^{T}\frac{\beta_t \gamma}{\sqrt t} \\
     & \leq 2\gamma {M_{2}}^2 \sqrt{T}.
   \end{aligned}
\end{equation}
Combining (\ref{c}), (\ref{d}) and (\ref{e}), we have
\begin{equation*}
\begin{aligned}
    & \frac{1}{M} \sum_{t=1}^{T}\left( J(\boldsymbol{x}^{\ast})-J(\boldsymbol{x}^{adv}_{t})\right) \\
    & \leq \frac{d M_2 G^2\sqrt{T}}{2\gamma } + \frac{\beta G^2}{2\gamma (1-\lambda)^2} +   2\gamma {M_{2}}^2 \sqrt{T}.
   \end{aligned}
\end{equation*}
Thus
\begin{equation*}
\begin{aligned}
    & \frac{1}{MT}\sum_{t=1}^{T}\left( J(\boldsymbol{x}^{\ast})-J(\boldsymbol{x}^{adv}_{t}) \right) \\
   &  \leq \frac{d M_2 G^2}{2\gamma \sqrt{T}} +  \frac{\beta G^2}{2\gamma (1-\lambda)^2} +\frac{   2\gamma {M_{2}}^2}{ \sqrt{T}}.
   \end{aligned}
 \end{equation*}
 By concavity of $J(\boldsymbol{x})$, we obtain
 \begin{equation}\label{f}
   \begin{aligned}
   &  J(\boldsymbol{x}^{\ast})-J(\boldsymbol{\bar{x}}^{adv}_T) \\
   &  \leq M \left(\frac{d M_2 G^2}{2\gamma \sqrt{T}} +  \frac{\beta G^2}{2\gamma (1-\lambda)^2} +\frac{   2\gamma {M_{2}}^2}{ \sqrt{T}}\right).
   \end{aligned}
 \end{equation}

This completes the proof of Theorem \ref{thm:bigtheorem}.\\

\section{Pseudocode for MDCS-MEF and MDCS-OPS }

For clarity,  we give the detailed descriptions of MDCS-MEF (\cref{alg:MEF}) and MDCS-OPS (\cref{alg:OPS}) for image classification tasks.

\begin{algorithm}
\caption{MDCS-MEF}
\label{alg:MEF}
    \begin{algorithmic}[1]
    \Require
    Loss function $J$, a raw example $\bm{x}$ with ground-truth label $y$, the perturbation size $\epsilon$, momentum parameter $0 < \beta_t \leq 1$, step-size $\alpha_t > 0$, maximum iteration $T$, the outer/inner decay factor $\mu_{outer}$/$\mu_{inner}$; the number of randomly sampled examples, $N$; neighborhood radius $\xi$, exploration radius $\gamma$.
    \State Initialize $\bm{x}^{adv}_0=\bm{x}$, $d_{0,i} = 1$. $\bm{g}^{outer}_{0} = 0$; $\{g^{inner}_{0,n}\}_{n=1}^N = 0$; $\bm{x}^{adv}_{0}=\bm{x}$; $\alpha={\epsilon/}{T}$
    \Repeat
    \State ${\{\bm{x}_{t,n}\}}_{n=1}^N \in U_{\gamma}(\bm{x}^{adv}_{t})$
    \State ${\{\bm{x}^{'}_{t,n}\}}_{n=1}^N = {\{\bm{x}_{t,n}\}}_{n=1}^N + \xi \cdot \mathrm{sign}(\ g^{inner}_{t,n}) \}_{n=1}^N)$
    \State $\{g_{t,n}\}_{n=1}^N = \nabla_{\bm{x}}J({\{\bm{x}^{'}_{t,i}\}}_{i=1}^N)$
    \State $\{g^{inner}_{t,n}\}_{n=1}^N = \frac{\{g_{t,n}\}_{n=1}^N}{\|\{g_{t,n}\}_{n=1}^N\|_{1}} - \mu_{inner}\{g^{inner}_{t,n}\}_{n=1}^N$
    \State $\bm{g}^{outer}_{t} = \mu_{outer}\ \bm{g}^{outer}_{t} + \frac{1}{N}\sum_{i=1}^{N}{\frac{\{g_{t,n}\}_{n=1}^N}{\|\{g_{t,n}\}_{n=1}^N\|_{1}}}$
    \State $d_{t,i} = \mathrm{min} (\frac{1}{|\bm{g}^{outer}_{t,i}| }, d_{t-1,i} )$, 
    \State $\bm{D}_t = \mathrm{diag}(\bm{d}_{t})$,
    \State $\bm{x}^{adv}_{t+1} = P_{\mathbf{Q}, \bm{D}^{-1}_t} (\bm{x}^{adv}_{t} + \alpha_t \cdot \bm{D}_t \cdot g_{t}^{outer})$.
    \Until {$t = T$}
    \Ensure  $\bm{x}_{T}^{adv}$.
    \end{algorithmic}
\end{algorithm}

\begin{algorithm}
\caption{MDCS-OPS}
\label{alg:OPS}
\begin{algorithmic}[1]
\Require 
Loss function $J$, a raw example $\bm{x}$ with ground-truth label $y$, the perturbation size $\epsilon$, the maximum iteration $T$ and decay factor $\mu$; Level list $K_{\text{list}}$; Radius list $R_{\text{list}}$; Number of operator samples $N_p$; Number of perturbation samples $N_e$.
\State Initialize $\bm{x}^{adv}_0=\bm{x}$, $d_{0,i} = 1$, $\alpha = \epsilon / T, \bm{m}_0 = \bm{0}, \Delta_0 = 0.$
    \Repeat
    \State $\bar{g} = \nabla_{\bm{x}} {J}(f(\bm{x} + \Delta_i), y).$
    \For{$\delta$ in $S(D, N_e)$}
        \For{$op$ in $S(P, N_p)$}
        \State $\bar{\bm{g}} = \bar{\bm{g}} + \nabla_{\bm{x}} J \left[f(op(\bm{x} + \Delta_i + \delta)), y\right]$
        \EndFor
    \EndFor
    \State $\bar{\bm{g}} = \frac{\bar{\bm{g}}}{N_e \times N_p + 1}.$
    \State $\bm{m}_{t+1} = \mu \cdot \bm{m}_t + \frac{\bar{\bm{g}}}{\|\bar{\bm{g}}\|_1}.$
    \State $d_{t,i} = \mathrm{min} (\frac{1}{|\bm{m}_{t,i}| }, d_{t-1,i} )$,
    \State $\bm{D}_t = \mathrm{diag}(\bm{d}_{t})$,
    \State $\Delta_{i+1} = P_{\mathbf{Q}, \bm{D}^{-1}_t}(\Delta_i + \alpha_t \cdot \bm{D}_t \cdot \bm{m}_{t+1}).$
    \Until {$t = T$}
    \Ensure  $\bm{x}_{T}^{adv}$.
\end{algorithmic}
\end{algorithm}

\section{Additional Experiments}

This section presents more experimental results to provide a comprehensive empirical evaluation of the proposed methods on image classification, Visual Question Answering (VQA), and cross-modal retrieval tasks.

\subsection{Image Classification Tasks}
\label{app:cv}

\subsubsection{Transferability and Stability}
\label{app:8.1.1}

In this subsection, we extend our evaluation to include additional source models, namely VGG16 and ViT-B/16. The success rates of these transfer attacks are quantified in Tab. \ref{tab:addition_normal}. The results clearly demonstrate that our derived MDCS strategy consistently enhances adversarial transferability compared to the baseline attacks. Notably, the MDCS-OPS variant achieves the highest attack success rate in the black-box setting, establishing a new state-of-the-art. As shown in Fig. \ref{fig:app_stable}, integrating the MDCS strategy into TI and SI significantly enhances the iterative stability of adversarial attacks.

\subsubsection{Convergence Behavior}
\label{app:8.1.2}

To make a through comparison, we also investigate the white-box and black-box attack convergence behavior of loss function $J(\bm{x}_{t}^{adv},y)$ with respect to the number of iterations. The relationship between the value of loss function $J(\bm{x}_{t}^{adv},y)$ and the number of iterations is shown in Fig. \ref{fig:bb_loss}. We summarize three key findings: (1) Overfitting is observed in I-FGSM and MI-FGSM. (2) The stability of black-box ASR ensures that whenever the iteration is stopped, relatively good and reliable transferability can be provided. (3) From an optimization perspective,  Fig.\ref{fig:bb_loss} clearly verify the overfitting of I-FGSM and MI-FGSM, and the stability of our MDCS.  Naturally, if the concerned momentum is already stable like that in OPS, gains from MDCS are moderate.

To sum up, stability refers to maintaining performance across iterations without degradation. The degradation in black-box ASR typically indicates overfitting. Therefore, stability contributes to improved transferability.

\begin{figure*}[htbp]
\setlength{\belowcaptionskip}{-0.2cm}
\centering
\includegraphics[width=0.48\textwidth]{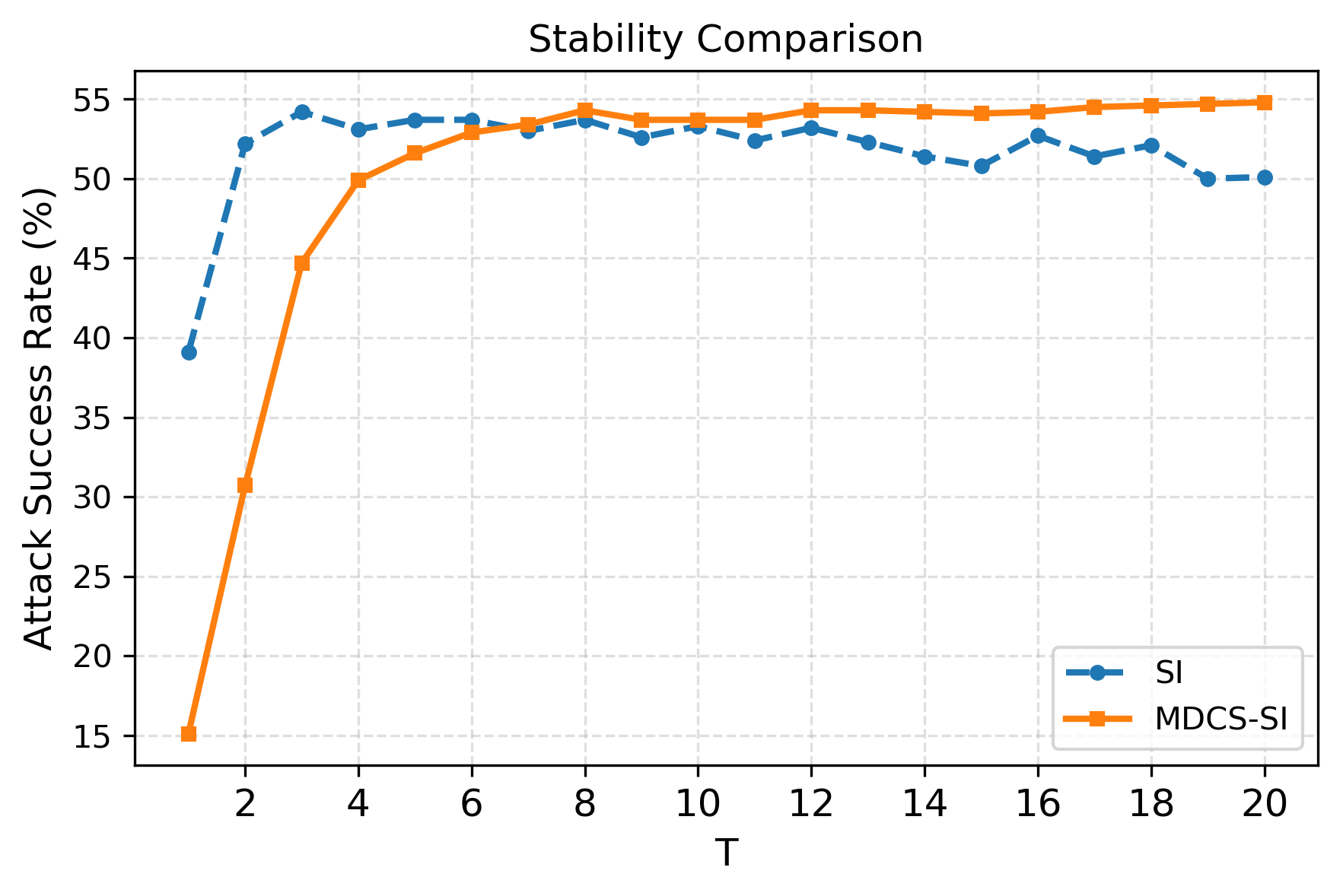}
\includegraphics[width=0.48\textwidth]{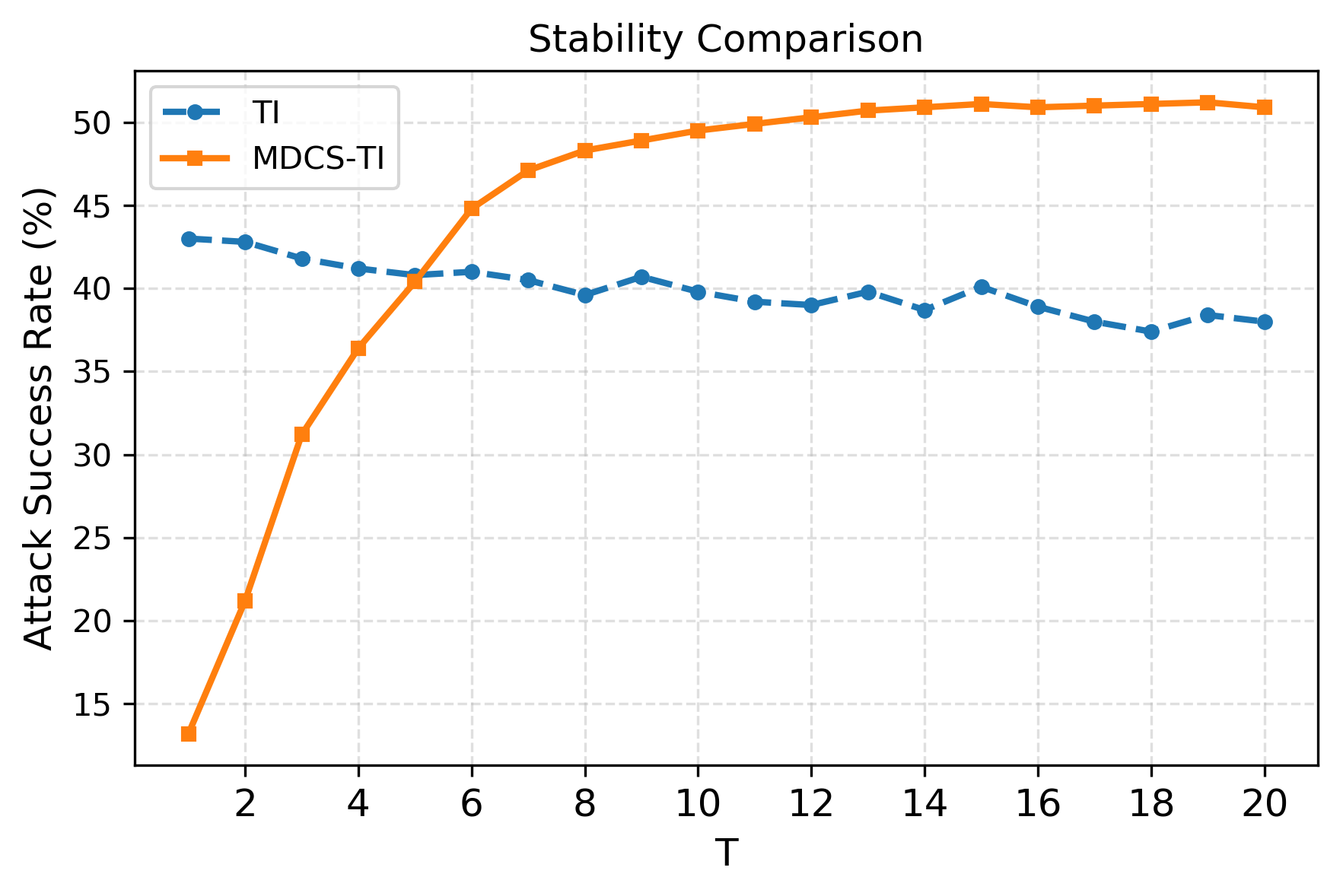}
\caption{Stability comparison of SI and TI (based on MI-FGSM). We employ Res50 as surrogate model and Inc-v3 as target model with $\epsilon = 16/255$.}
\label{fig:app_stable}
\end{figure*}

\begin{figure*}[htbp]
\setlength{\belowcaptionskip}{-0.2cm}
\centering
\includegraphics[width=\textwidth]{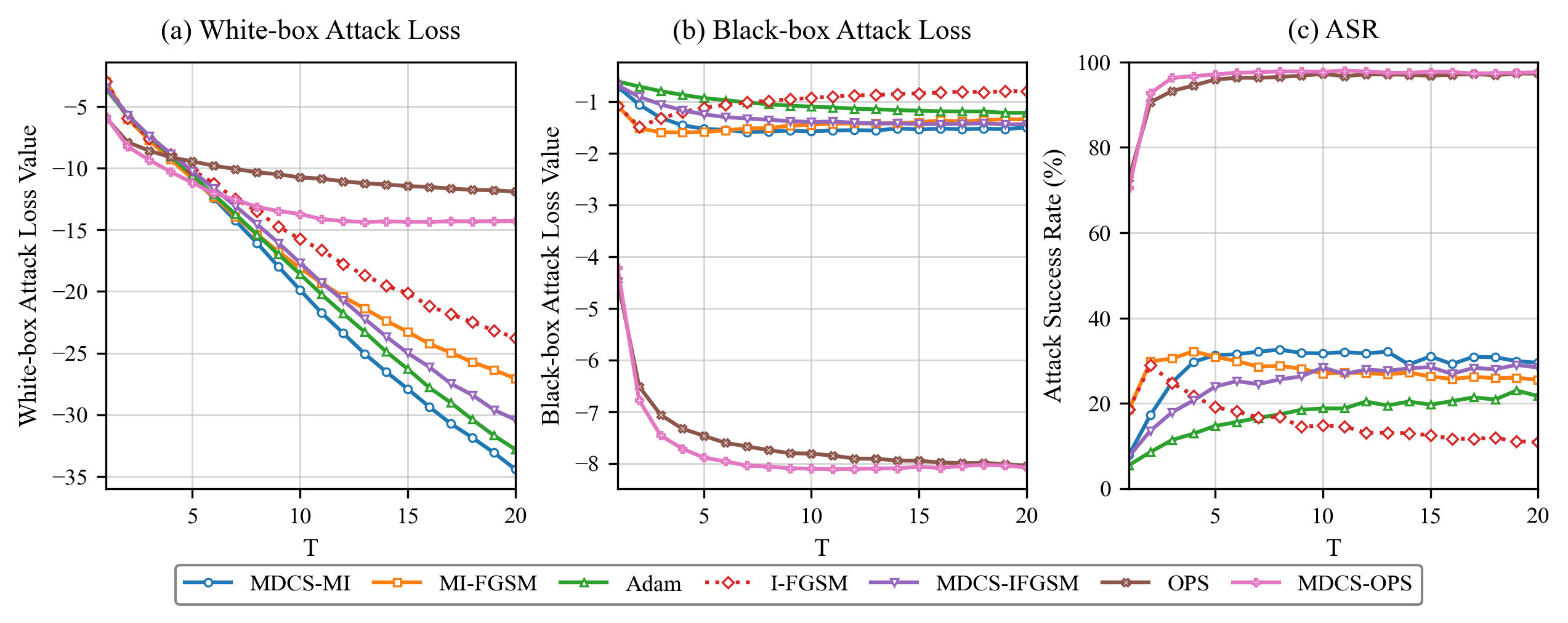}
\caption{Dynamics of white-box and black-box losses. We employ Res50 as surrogate model and Vis-S as target model.}
\label{fig:bb_loss}
\end{figure*}

\begin{figure*}[htbp]
\centering
\includegraphics[width=0.9\textwidth]{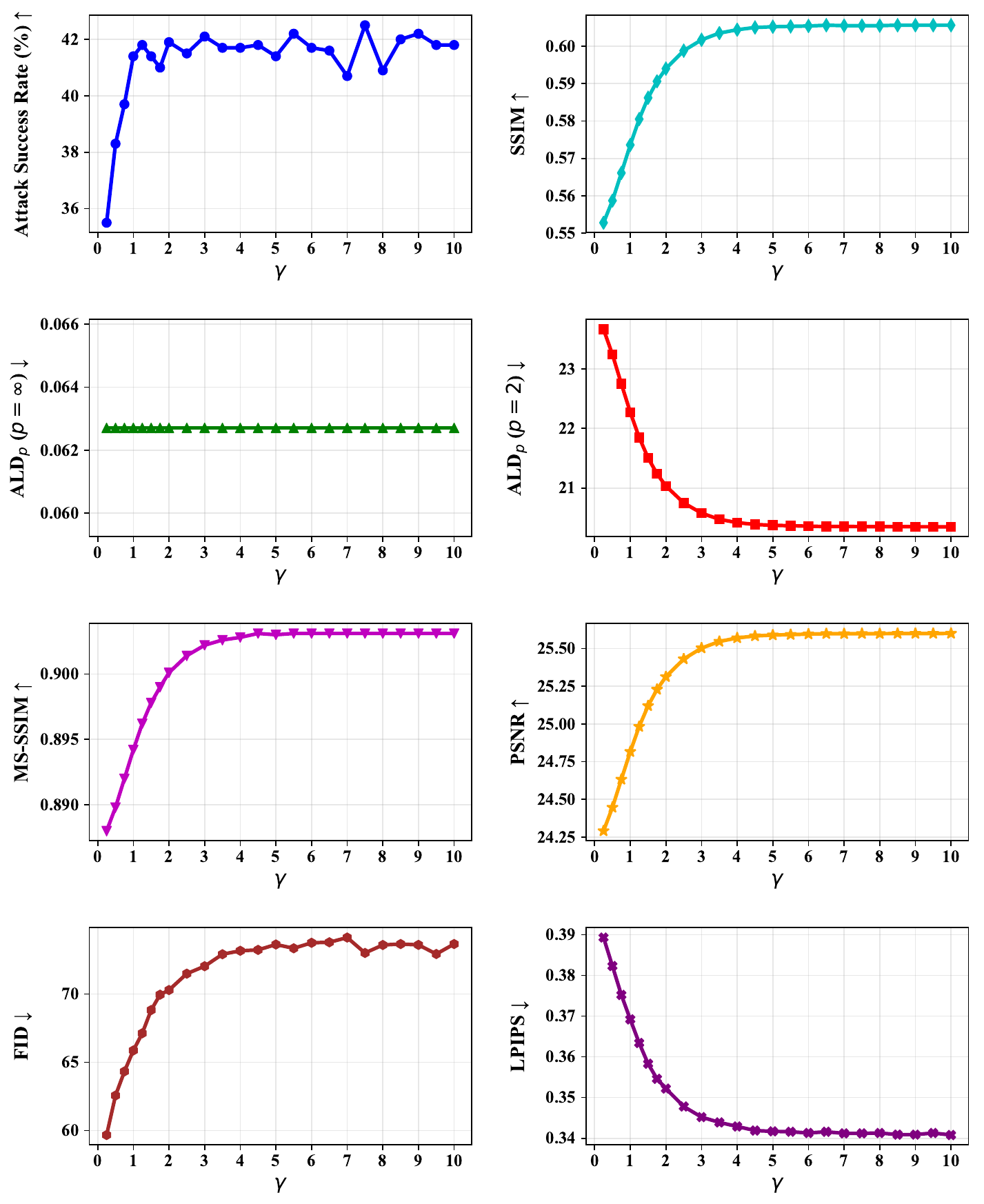}
\caption{Ablation study on the value of $\gamma$ for MDCS-MI. We employ Res50 as surrogate model and Inc-v3 as target model.}
\label{fig:gamma}
\end{figure*}

\subsubsection{Ablation Study}
\label{app:8.1.3}

Here, we investigate the impact of the step-size parameter $\gamma$ on the performance of MDCS-MI, which governs the trade-off between transferability and imperceptibility. According to \cite{zhao2025revisiting}, there remains a notable lack of consensus and attention regarding the proper metrics for evaluating imperceptibility. 
A popular proxy measure of imperceptibility is the $L_p$-norm of the perturbation. It offers a good trade-off between simplicity, mathematical tractability, and practicality in applications \cite{Chawin2024}. Recent research \cite{zhao2025revisiting,zhao2020towards} suggests that it is problematic to solely constrain all attacks with the same $L_{\infty}$ norm bound without more comprehensive comparisons. Thus, we employ 7 different metrics as the indicators of imperceptibility of the crafted AEs, including the Average $L_{\infty}$ Distortion (ALD$_p$), Average $L_{2}$ Distortion (ALD$_p$) \cite{Lin2020NesterovAG}, Peak Signal-to-Noise Ratio (PSNR), Structural Similarity Index Measure (SSIM), Multi Scale Structural Similarity Index Measure (MS-SSIM), Frechet Inception Distance (FID) \cite{Martin2017FID}, Learned Perceptual Image Patch Similarity (LPIPS) \cite{zhang2018lpips}. As shown in Fig. \ref{fig:gamma}, the attack successful rate of MDCS-MI increases with the growth of $\gamma$ (varied within the range of 0.25 to 10). The rate plateaus at around 47\% when $\gamma$ is greater than 2. For imperceptibility, the increase of $\gamma$ leads to a significant improvement in all metrics except for FID and ALD$_p (p=\infty)$. Therefore, a holistic assessment with diverse metrics is essential for evaluating the imperceptibility of AEs. By balancing the transferability and imperceptibility in our experiment, $\gamma$ is determined by simple grid search over the range [2, 4].

\begin{figure}[htbp]
\centering
\includegraphics[width=\linewidth]{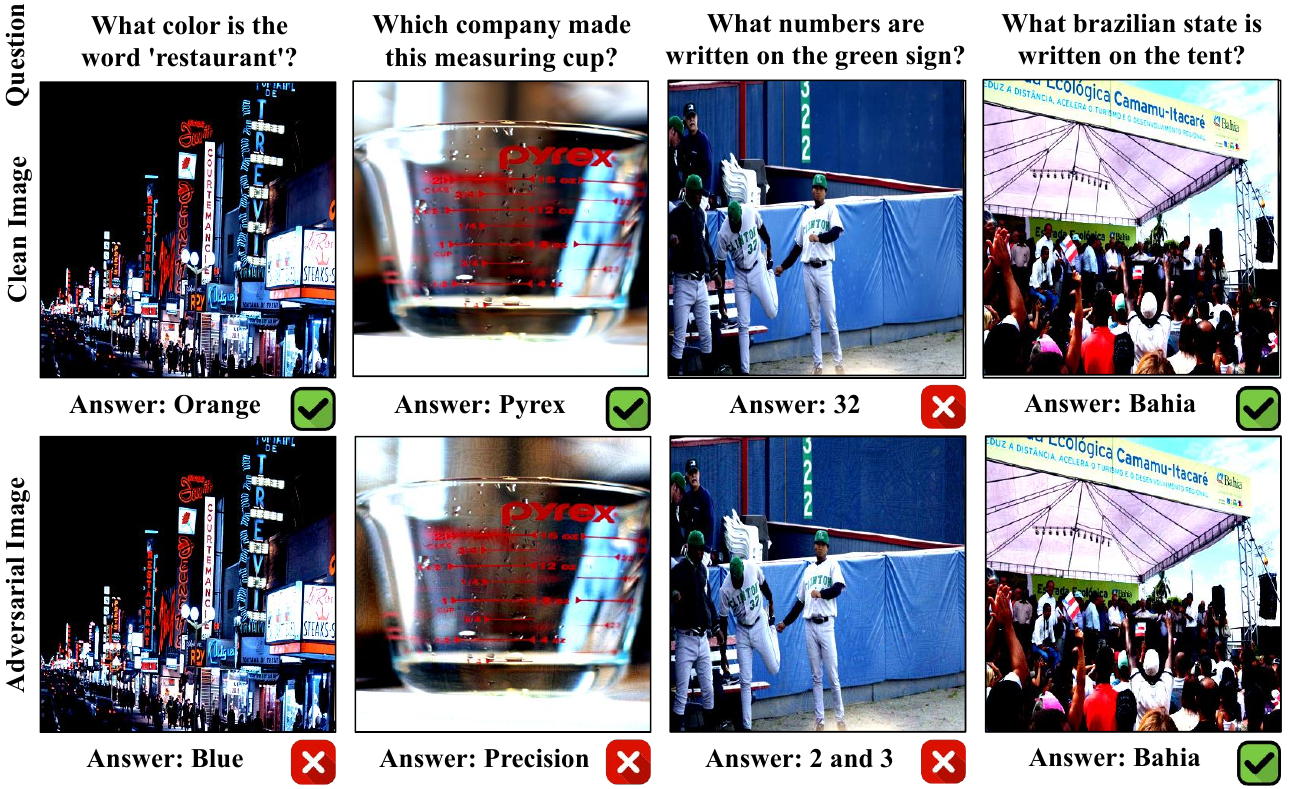}
\caption{A comparison between responses from the clean image and the adversarial image generated by MDCS-MI against prismVLM. Image and questions source are from textVQA.}
\label{fig:vqa_vis}
\end{figure}

\begin{table*}[htbp]\small
\captionsetup{font={normalsize}}
\caption{\normalsize{Transferability comparisons across different networks. AEs are crafted for VGG16 and ViT-B/16. The best results are marked in bold. The gray area represents adversarial attacks under a white-box setting, the rest are black-box attacks.}}
\label{tab:addition_normal}
\centering
\begin{tabular}{c|c|ccccccc}%
\toprule %[2pt]
 Model&Attack&Res50& VGG16 & Mob-v2 & Inc-v3 & ViT-B & PiT-B& Vis-S \\
\midrule %[2pt]
\multirow{12}{*}{VGG16}&I-FGSM& 35.8& \cellcolor[rgb]{ .827,.827,.827}99.9&62.9 &27.1&  8.0& 15.5& 25.1\\
&MI& 59.1& \cellcolor[rgb]{ .827,.827,.827}99.8& 80.2 &55.9& 15.6& 29.6& 44.4\\
&\bf{MDCS-MI}&  59.5&  \cellcolor[rgb]{ .827,.827,.827}99.8& 81.0&59.9& 17.5& 30.3&45.9 \\
&VMI& 74.1&  \cellcolor[rgb]{ .827,.827,.827}99.8& 88.6&72.9& 26.1& 44.2&60.9\\
& EMI& 74.6& \cellcolor[rgb]{ .827,.827,.827}100.0& 90.5& 69.8& 21.5& 38.9&57.9\\
&GRA& 71.3& \cellcolor[rgb]{ .827,.827,.827}100.0&  89.9&79.6 & 27.0 & 42.7& 58.0\\
&MUMODIG&90.6& \cellcolor[rgb]{ .827,.827,.827}100.0&  96.4&90.2& 31.3& 54.9& 78.3\\
&PGN& 72.1& \cellcolor[rgb]{ .827,.827,.827}100.0& 89.4&80.8& 28.0& 44.9& 58.6\\
&MEF& 83.0& \cellcolor[rgb]{ .827,.827,.827}99.9& 93.9&82.8&31.9&51.6&70.3\\
&\bf{MDCS-MEF}& 85.7&\cellcolor[rgb]{ .827,.827,.827} 99.9& 93.5& 83.9& 34.8& 55.8& 73.3\\
&SIA& 92.9& \cellcolor[rgb]{ .827,.827,.827}100.0& 98.1&  86.9&  35.5&  61.9&  84.8\\
&OPS& 97.4& \cellcolor[rgb]{ .827,.827,.827}100.0& 99.2& 99.1& 63.7& 80.1& 93.4\\
&\bf{MDCS-OPS}& \bf{98.1}&\cellcolor[rgb]{ .827,.827,.827}100.0&\bf{99.6}&\bf{99.3}& \bf{66.7}& \bf{82.3}& \bf{93.9}\\
\hline
\multirow{12}{*}{ViT-B/16}&I-FGSM& 18.0&37.8&35.5&25.4 &  \cellcolor[rgb]{ .827,.827,.827}99.9& 29.0& 28.6\\
&MI& 41.2& 63.9& 58.7&47.3& \cellcolor[rgb]{ .827,.827,.827}100.0& 50.2& 51.0\\
&\bf{MDCS-MI}& 45.7&  70.8& 61.3&53.0 & \cellcolor[rgb]{ .827,.827,.827}100.0& 54.0&55.1\\
&VMI& 54.6&  68.9 & 64.7&56.2& \cellcolor[rgb]{ .827,.827,.827}100.0& 67.5&67.3\\
& EMI& 59.3& 75.9 & 71.8& 64.7& \cellcolor[rgb]{ .827,.827,.827}100.0& 72.2&74.7\\
&GRA&66.8& 76.1&  74.3&71.4& \cellcolor[rgb]{ .827,.827,.827}99.6& 80.3& 80.4\\
&MUMODIG& 73.2& 78.6&  77.2 &75.0& \cellcolor[rgb]{ .827,.827,.827}99.4& 83.7& 82.8\\
&PGN& 69.8& 79.4& 77.4 &75.4&  \cellcolor[rgb]{ .827,.827,.827}99.6& 84.2& 85.2\\
&MEF&  75.0& 81.1& 82.8&78.1&\cellcolor[rgb]{ .827,.827,.827}99.3&88.2& 88.1\\
&\bf{MDCS-MEF}& 74.8&84.1& 84.6& 80.1& \cellcolor[rgb]{ .827,.827,.827}99.8& 88.0& 88.0\\
&SIA&  84.2& 86.8& 86.8&  78.3&  \cellcolor[rgb]{ .827,.827,.827}99.9&  91.4&  91.5\\
&OPS& 91.2& 93.8& 93.9 & 94.4& \cellcolor[rgb]{ .827,.827,.827}99.3& 95.9& 96.0\\
&\bf{MDCS-OPS}& \bf{93.9}&\bf{95.9}&\bf{96.0}&\bf{96.6}& \cellcolor[rgb]{ .827,.827,.827}99.9& \bf{97.7}& \bf{98.3}\\
\hline
\end{tabular}
\end{table*}

\begin{table}\small
\captionsetup{font={normalsize}}
\caption{\normalsize{The white-box adversarial attacks of Pure and OCR VQA tasks in VLMs.}}
\label{tab:whitebox-vqa}
\centering
\begin{tabular}{c|c|cccc}%
\toprule %[2pt]$
Model&Type& Attack& Pre& Post$_{N}$ & ASR \\	
\midrule %[2pt]
\multirow{10}{*}{LLaVA-1.5}&\multirow{5}{*}{Pure}& FGSM& 47.3& 37.2 & 10.1\\
& & PGD& 47.3& 22.1 & 25.2\\
&& MEF& 47.3&22.1& 25.2\\
&&MI&47.3&20.9 & \textbf{26.4}\\
&&\bf{MDCS-MI}&47.3& 20.9 & \textbf{26.4}\\
\cline{2-6}
&\multirow{5}{*}{OCR}& FGSM& 58.5& 53.7 & 4.8\\
&& PGD& 58.5& 37.5 & 21.0\\
&&MEF&58.5&36.3 & 22.2\\
 & & MI& 58.5&37.2 & 21.3\\
&&\bf{MDCS-MI}&58.5&35.4 & \textbf{23.1}\\
\hline
\multirow{10}{*}{PrismVLM
}&\multirow{5}{*}{Pure}& FGSM& 56.9& 41.3&15.6\\
&& PGD& 56.9& 26.1&30.8\\
&& MEF& 56.9&25.8&31.1\\
&&MI&56.9&26.0&30.9\\
&&\bf{MDCS-MI}&56.9&22.8&\textbf{34.1}\\
\cline{2-6}
&\multirow{5}{*}{OCR}& FGSM& 61.9& 49.7&12.2\\
&& PGD& 61.9& 31.4&30.5\\
&& MEF& 61.9&31.1&30.8\\
&&MI&61.9&31.6&30.3\\
&&\bf{MDCS-MI}&61.9&27.0&\textbf{34.9}\\
\hline
\end{tabular}
\end{table}

\subsection{VQA Tasks}

VQA serves as a practical application of large multimodal models, in which the model is tasked with generating an open-ended answer from a given image and an associated question. In contrast to adversarial attacks on image classification, white-box attacks for VQA have not achieved satisfactory performance \cite{cui2024robustness, yin2024vqattack}. Therefore, this subsection specifically investigates white-box attacks on VQA, utilizing image perturbation techniques.

We conduct experiments on the TextVQA dataset \cite{singh2019towards}, targeting two representative VLMs: LLaVA-1.5 \cite{llava} and PrismVLM \cite{Prism}. To validate the effect of our MDCS, we focus on attacking visual encoders to generate AEs for LLaVA-1.5 and PrismVLM, respectively. The white-box attack success rates are reported for two input types: Pure (raw textual questions) and OCR (augmented prompts that incorporate both raw questions and OCR-extracted text from the images). 

The notations Pre, Post$_{N}$ in Tab. \ref{tab:whitebox-vqa} refer to accuracy (\%) for pre-attack, post-attack under normal setting, respectively. ASR represents the Attack Success Rate, which is derived from the values of Pre and Post$_{N}$. Tab. \ref{tab:whitebox-vqa} presents the attack success rates of our method compared to several baselines. MDCS-MI consistently achieves the highest rates across all experimental settings. The performance gain is particularly significant on the PrismVLM model, achieving an improvement of 3.0\% and 4.1\% in the Pure and OCR settings, respectively. Fig. \ref{fig:vqa_vis} presents a side-by-side comparison of the images and PrismVLM's answers on the textVQA dataset before and after being subjected to the MDCS-MI attack.

\subsection{Cross-Modal Retrieval Tasks}
\label{app: retrieval}

In the main text, we have reported R@1 attack success rates for cross-model transferability experiments for conciseness. In this subsection, we present the complete evaluation results for both image-text and text-image retrieval tasks on the Flickr30K (Table \ref{tab:addition-vqa}) and MSCOCO (Table \ref{tab:addition-vqa2}) datasets. These results include attack success rates at R@1, R@5, and R@10. 

These results validate the general applicability of our MDCS module. Integrating MDCS with a spectrum of attack frameworks, from foundational methods like SGA and DRA to the state-of-the-art SA-AET, yields consistent performance gains. In each tested cross-model transferable scenario, MDCS-SGA, MDCS-DRA, and MDCS-SAAET demonstrably outperform their baseline counterparts. This confirms that MDCS functions as a potent and model-agnostic component for amplifying the transferability of adversarial attacks across various VLP architectures and metrics.

\begin{table*}[htbp]\small
\captionsetup{font={normalsize}}
\caption{\normalsize{Detailed comparison of attack success rates on Flickr30K dataset. The gray-shaded cells represent attacks in a white-box setting, while the remaining cells show results for black-box transfer attacks. We report the attack success rates (\%) for cross modal retrieval tasks using R@1, R@5, and R@10 metrics. The adversarial perturbations are constrained by an $L_{\infty}$ norm of 8/255, generated over 10 iterations with a step-size of 2/255.}}

%The black-box adversarial attacks against multimodal models. The source model column shows VLP models we use to generate multimodal adversarial examples. R@1 denotes the attack success rate at the top-1 rank. The adversarial perturbation is set to 8/255.

\label{tab:addition-vqa}
\centering
\resizebox{.86\textwidth}{!}{
    \begin{tabular}{c|c|ccc|ccc|ccc|ccc}
    \toprule
    \multicolumn{14}{c}{\textbf{FLICKR30K (Image-Text Retrieval)}} \\
    \midrule
    \multirow{2}[4]{*}{Source} & \multirow{2}[4]{*}{Attack} & \multicolumn{3}{c}{ALBEF} & \multicolumn{3}{c}{TCL} & \multicolumn{3}{c}{CLIP$_{\text{ViT}}$} & \multicolumn{3}{c}{CLIP$_{\text{CNN}}$} \\
\cmidrule{3-14}      &   & R@1 & R@5 & R@10 & R@1 & R@5 & R@10 & R@1 & R@5 & R@10 & R@1 & R@5 & \multicolumn{1}{c}{R@10} \\
    \midrule
    \multirow{7}[2]{*}{ALBEF} & Co-Attack & \cellcolor[rgb]{ .851,  .851,  .851}97.1  & \cellcolor[rgb]{ .851,  .851,  .851}94.59  & \cellcolor[rgb]{ .851,  .851,  .851}92.60  & 39.52  & 20.40  & 14.53  & 29.82  & 11.73  & 6.61  & 31.29  & 11.73  & 5.77  \\
      & SGA & \cellcolor[rgb]{ .851,  .851,  .851}99.79  & \cellcolor[rgb]{ .851,  .851,  .851}99.80  & \cellcolor[rgb]{ .851,  .851,  .851}99.80  & 87.67  & 77.09  & 70.54  & 38.04  & 19.11  & 13.11  & 41.63  & 21.56  & 14.62  \\
      & \textbf{MDCS-SGA} & \cellcolor[rgb]{ .851,  .851,  .851}100.0  & \cellcolor[rgb]{ .851,  .851,  .851}100.0  & \cellcolor[rgb]{ .851,  .851,  .851}100.0  & 91.78  & 82.11  & 76.75  & 41.35  & 22.64  & 15.55  & 45.08  & 25.90  & 17.71  \\
      & DRA & \cellcolor[rgb]{ .851,  .851,  .851}99.79  & \cellcolor[rgb]{ .851,  .851,  .851}99.80  & \cellcolor[rgb]{ .851,  .851,  .851}99.80  & 89.78  & 80.90  & 75.75  & 46.63  & 24.30  & 18.60  & 50.32  & 27.91  & 20.19  \\
      & \textbf{MDCS-DRA} & \cellcolor[rgb]{ .851,  .851,  .851}99.9  & \cellcolor[rgb]{ .851,  .851,  .851}99.80  & \cellcolor[rgb]{ .851,  .851,  .851}99.80  & 93.26  & 85.63  & 79.26  & 49.94  & 28.35  & 20.43  & 55.56  & 32.66  & 23.38  \\
      & SA-AET & \cellcolor[rgb]{ .851,  .851,  .851}99.9  & \cellcolor[rgb]{ .851,  .851,  .851}99.80  & \cellcolor[rgb]{ .851,  .851,  .851}99.80  & 96.31  & 92.56  & 89.58  & 54.23  & 33.13  & 24.90  & 58.88  & 37.53  & 27.50  \\
      & \textbf{MDCS-SAAET} & \cellcolor[rgb]{ .851,  .851,  .851}99.9  & \cellcolor[rgb]{ .851,  .851,  .851}99.90  & \cellcolor[rgb]{ .851,  .851,  .851}99.90  & \textbf{96.52 } & \textbf{94.47 } & \textbf{91.88 } & \textbf{60.25 } & \textbf{38.53 } & \textbf{29.17 } & \textbf{60.54 } & \textbf{42.71 } & \textbf{32.13 } \\
    \midrule
    \multirow{7}[2]{*}{TCL} & Co-Attack & 49.84  & 27.45  & 60.36  & \cellcolor[rgb]{ .851,  .851,  .851}91.68  & \cellcolor[rgb]{ .851,  .851,  .851}85.23  & \cellcolor[rgb]{ .851,  .851,  .851}80.96  & 32.64  & 13.40  & 6.81  & 32.06  & 14.27  & 8.14  \\
      & SGA & 93.33  & 87.17  & 83.70  & \cellcolor[rgb]{ .851,  .851,  .851}100.0  & \cellcolor[rgb]{ .851,  .851,  .851}100.0  & \cellcolor[rgb]{ .851,  .851,  .851}100.0  & 37.42  & 18.28  & 12.20  & 42.02  & 22.73  & 15.65  \\
      & \textbf{MDCS-SGA} & 95.10  & 90.08  & 87.10  & \cellcolor[rgb]{ .851,  .851,  .851}100.0  & \cellcolor[rgb]{ .851,  .851,  .851}100.0  & \cellcolor[rgb]{ .851,  .851,  .851}100.0  & 42.45  & 21.18  & 15.75  & 46.87  & 26.74  & 20.29  \\
      & DRA & 95.31  & 91.18  & 89.80  & \cellcolor[rgb]{ .851,  .851,  .851}100.0  & \cellcolor[rgb]{ .851,  .851,  .851}100.0  & \cellcolor[rgb]{ .851,  .851,  .851}99.90  & 46.26  & 26.38  & 18.50  & 50.70  & 30.66  & 21.83  \\
      & \textbf{MDCS-DRA} & 96.66  & 92.48  & 90.70  & \cellcolor[rgb]{ .851,  .851,  .851}100.0  & \cellcolor[rgb]{ .851,  .851,  .851}100.0  & \cellcolor[rgb]{ .851,  .851,  .851}100.0  & 50.92  & 28.35  & 20.83  & 56.19  & 34.88  & 25.44  \\
      & SA-AET & 98.85  & 96.69  & 95.40  & \cellcolor[rgb]{ .851,  .851,  .851}100.0  & \cellcolor[rgb]{ .851,  .851,  .851}100.0  & \cellcolor[rgb]{ .851,  .851,  .851}100.0  & 53.99  & 33.02  & 26.42  & 59.64  & 39.96  & 30.38  \\
      & \textbf{MDCS-SAAET} & \textbf{99.17 } & \textbf{96.69 } & \textbf{96.10 } & \cellcolor[rgb]{ .851,  .851,  .851}100.0  & \cellcolor[rgb]{ .851,  .851,  .851}100.0  & \cellcolor[rgb]{ .851,  .851,  .851}100.0  & \textbf{57.79 } & \textbf{39.25 } & \textbf{29.17 } & \textbf{62.71 } & \textbf{44.19 } & \textbf{34.29 } \\
    \midrule
    \multirow{7}[2]{*}{CLIP$_{\text{ViT}}$} & Co-Attack & 8.55  & 1.50  & 0.50  & 10.01  & 2.01  & 0.70  & \cellcolor[rgb]{ .851,  .851,  .851}78.5  & \cellcolor[rgb]{ .851,  .851,  .851}57.4  & \cellcolor[rgb]{ .851,  .851,  .851}45.53  & 29.50  & 11.42  & 6.08  \\
      & SGA & 21.58  & 8.02  & 5.10  & 24.66  & 9.45  & 4.91  & \cellcolor[rgb]{ .851,  .851,  .851}100.0  & \cellcolor[rgb]{ .851,  .851,  .851}100.0  & \cellcolor[rgb]{ .851,  .851,  .851}99.9  & 52.49  & 34.04  & 25.23  \\
      & \textbf{MDCS-SGA} & 29.30  & 13.23  & 8.60  & 30.56  & 13.07  & 8.52  & \cellcolor[rgb]{ .851,  .851,  .851}100.0  & \cellcolor[rgb]{ .851,  .851,  .851}100.0  & \cellcolor[rgb]{ .851,  .851,  .851}100.0  & 57.60  & 37.32  & 28.73  \\
      & DRA & 27.95  & 11.92  & 7.90  & 29.08  & 12.56  & 7.72  & \cellcolor[rgb]{ .851,  .851,  .851}100.0  & \cellcolor[rgb]{ .851,  .851,  .851}99.90  & \cellcolor[rgb]{ .851,  .851,  .851}99.80  & 62.45  & 42.81  & 31.72  \\
      & \textbf{MDCS-DRA} & 34.41  & 15.63  & 11.40  & 35.51  & 15.08  & 9.92  & \cellcolor[rgb]{ .851,  .851,  .851}100.0  & \cellcolor[rgb]{ .851,  .851,  .851}100.0  & \cellcolor[rgb]{ .851,  .851,  .851}100.0  & 68.45  & 46.51  & 37.08  \\
      & SA-AET & 35.97  & 19.74  & 15.30  & 37.93  & 19.90  & 13.83  & \cellcolor[rgb]{ .851,  .851,  .851}100.0  & \cellcolor[rgb]{ .851,  .851,  .851}100.0  & \cellcolor[rgb]{ .851,  .851,  .851}100.0  & 69.09  & 50.11  & 42.12  \\
      & \textbf{MDCS-SAAET} & \textbf{42.34 } & \textbf{23.35 } & \textbf{18.30 } & \textbf{43.52 } & \textbf{25.03 } & \textbf{17.84 } & \cellcolor[rgb]{ .851,  .851,  .851}100.0  & \cellcolor[rgb]{ .851,  .851,  .851}100.0  & \cellcolor[rgb]{ .851,  .851,  .851}100.0  & \textbf{73.18 } & \textbf{55.29 } & \textbf{46.34 } \\
    \midrule
    \multirow{7}[2]{*}{CLIP$_{\text{CNN}}$} & Co-Attack & 10.53  & 1.60  & 0.40  & 12.54  & 2.01  & 0.70  & 27.24  & 12.05  & 6.50  & \cellcolor[rgb]{ .851,  .851,  .851}95.91  & \cellcolor[rgb]{ .851,  .851,  .851}89.75  & \cellcolor[rgb]{ .851,  .851,  .851}85.99  \\
      & SGA & 15.02  & 4.71  & 2.50  & 18.34  & 6.33  & 3.21  & 39.51  & 19.83  & 12.70  & \cellcolor[rgb]{ .851,  .851,  .851}100.0  & \cellcolor[rgb]{ .851,  .851,  .851}99.58  & \cellcolor[rgb]{ .851,  .851,  .851}99.38  \\
      & \textbf{MDCS-SGA} & 19.19  & 8.62  & 5.00  & 23.92  & 9.25  & 6.01  & 45.89  & 24.30  & 17.68  & \cellcolor[rgb]{ .851,  .851,  .851}99.87  & \cellcolor[rgb]{ .851,  .851,  .851}99.68  & \cellcolor[rgb]{ .851,  .851,  .851}99.59  \\
      & DRA & 19.08  & 6.41  & 3.30  & 22.02  & 7.14  & 3.01  & 48.34  & 26.48  & 17.99  & \cellcolor[rgb]{ .851,  .851,  .851}100.0  & \cellcolor[rgb]{ .851,  .851,  .851}99.58  & \cellcolor[rgb]{ .851,  .851,  .851}99.28  \\
      & \textbf{MDCS-DRA} & 23.04  & 10.02  & 6.40  & 25.82  & 10.45  & 6.51  & 55.83  & 31.98  & 22.15  & \cellcolor[rgb]{ .851,  .851,  .851}100.0  & \cellcolor[rgb]{ .851,  .851,  .851}99.68  & \cellcolor[rgb]{ .851,  .851,  .851}99.59  \\
      & SA-AET & 23.88  & 9.12  & 6.00  & 25.18  & 10.05  & 6.91  & 54.60  & 31.26  & 23.78  & \cellcolor[rgb]{ .851,  .851,  .851}100.0  & \cellcolor[rgb]{ .851,  .851,  .851}100.0  & \cellcolor[rgb]{ .851,  .851,  .851}99.79  \\
      & \textbf{MDCS-SAAET} & \textbf{30.76 } & \textbf{13.93 } & \textbf{10.00 } & \textbf{33.40 } & \textbf{15.68 } & \textbf{10.52 } & \textbf{61.72 } & \textbf{41.33 } & \textbf{31.50 } & \cellcolor[rgb]{ .851,  .851,  .851}100.0  & \cellcolor[rgb]{ .851,  .851,  .851}100.0  & \cellcolor[rgb]{ .851,  .851,  .851}100.0  \\
    \midrule
    \multicolumn{14}{c}{\textbf{FLICKR30K (Text-Image Retrieval)}} \\
    \midrule
    \multirow{2}[4]{*}{Source} & \multirow{2}[4]{*}{Attack} & \multicolumn{3}{c}{ALBEF} & \multicolumn{3}{c}{TCL} & \multicolumn{3}{c}{CLIP$_{\text{ViT}}$} & \multicolumn{3}{c}{CLIP$_{\text{CNN}}$} \\
\cmidrule{3-14}      &   & R@1 & R@5 & R@10 & R@1 & R@5 & R@10 & R@1 & R@5 & R@10 & R@1 & R@5 & R@10 \\
    \midrule
    \multirow{7}[2]{*}{ALBEF} & Co-Attack & \cellcolor[rgb]{ .851,  .851,  .851}98.36  & \cellcolor[rgb]{ .851,  .851,  .851}96.41  & \cellcolor[rgb]{ .851,  .851,  .851}94.86  & 51.24  & 31.90  & 24.41  & 38.92  & 23.31  & 17.01  & 41.99  & 25.18  & 18.55  \\
      & SGA & \cellcolor[rgb]{ .851,  .851,  .851}99.95  & \cellcolor[rgb]{ .851,  .851,  .851}99.88  & \cellcolor[rgb]{ .851,  .851,  .851}99.88  & 87.88  & 77.07  & 70.94  & 46.17  & 28.40  & 21.96  & 50.36  & 32.24  & 24.67  \\
      & \textbf{MDCS-SGA} & \cellcolor[rgb]{ .851,  .851,  .851}99.98  & \cellcolor[rgb]{ .851,  .851,  .851}99.96  & \cellcolor[rgb]{ .851,  .851,  .851}99.96  & 91.24  & 81.87  & 75.55  & 49.71  & 30.88  & 23.68  & 53.93  & 35.15  & 27.69  \\
      & DRA & \cellcolor[rgb]{ .851,  .851,  .851}99.91  & \cellcolor[rgb]{ .851,  .851,  .851}99.92  & \cellcolor[rgb]{ .851,  .851,  .851}99.86  & 90.52  & 82.05  & 76.42  & 57.28  & 38.15  & 29.87  & 59.11  & 41.41  & 33.59  \\
      & \textbf{MDCS-DRA} & \cellcolor[rgb]{ .851,  .851,  .851}99.98  & \cellcolor[rgb]{ .851,  .851,  .851}99.92  & \cellcolor[rgb]{ .851,  .851,  .851}99.92  & 92.98  & 85.67  & 80.63  & 59.31  & 40.74  & 31.95  & 62.44  & 45.20  & 36.05  \\
      & SA-AET & \cellcolor[rgb]{ .851,  .851,  .851}100.0  & \cellcolor[rgb]{ .851,  .851,  .851}99.90  & \cellcolor[rgb]{ .851,  .851,  .851}99.90  & 96.19  & 92.21  & 89.32  & 63.50  & 45.95  & 36.81  & 65.18  & 48.37  & 39.87  \\
      & \textbf{MDCS-SAAET} & \cellcolor[rgb]{ .851,  .851,  .851}100.0  & \cellcolor[rgb]{ .851,  .851,  .851}99.94  & \cellcolor[rgb]{ .851,  .851,  .851}99.92  & \textbf{96.71 } & \textbf{93.24 } & \textbf{90.48 } & \textbf{67.01 } & \textbf{49.19 } & \textbf{40.86 } & \textbf{67.89 } & \textbf{51.80 } & \textbf{43.51 } \\
    \midrule
    \multirow{7}[2]{*}{TCL} & Co-Attack & 60.36  & 41.59  & 33.26  & \cellcolor[rgb]{ .851,  .851,  .851}95.48  & \cellcolor[rgb]{ .851,  .851,  .851}90.32  & \cellcolor[rgb]{ .851,  .851,  .851}86.74  & 42.69  & 26.44  & 20.37  & 47.82  & 30.47  & 23.13  \\
      & SGA & 92.84  & 87.31  & 83.46  & \cellcolor[rgb]{ .851,  .851,  .851}100.0  & \cellcolor[rgb]{ .851,  .851,  .851}100.0  & \cellcolor[rgb]{ .851,  .851,  .851}100.0  & 46.39  & 29.57  & 22.50  & 51.36  & 33.38  & 25.23  \\
      & \textbf{MDCS-SGA} & 94.69  & 89.54  & 86.78  & \cellcolor[rgb]{ .851,  .851,  .851}100.0  & \cellcolor[rgb]{ .851,  .851,  .851}100.0  & \cellcolor[rgb]{ .851,  .851,  .851}100.0  & 49.19  & 31.58  & 25.05  & 55.71  & 37.17  & 28.33  \\
      & DRA & 95.35  & 91.57  & 89.04  & \cellcolor[rgb]{ .851,  .851,  .851}100.0  & \cellcolor[rgb]{ .851,  .851,  .851}99.98  & \cellcolor[rgb]{ .851,  .851,  .851}99.94  & 56.80  & 39.31  & 31.62  & 61.54  & 43.57  & 35.01  \\
      & \textbf{MDCS-DRA} & 96.42  & 92.92  & 90.96  & \cellcolor[rgb]{ .851,  .851,  .851}100.0  & \cellcolor[rgb]{ .851,  .851,  .851}100.0  & \cellcolor[rgb]{ .851,  .851,  .851}100.0  & 59.76  & 41.91  & 35.02  & 65.01  & 46.89  & 38.60  \\
      & SA-AET & 98.48  & 96.82  & 95.43  & \cellcolor[rgb]{ .851,  .851,  .851}99.98  & \cellcolor[rgb]{ .851,  .851,  .851}99.96  & \cellcolor[rgb]{ .851,  .851,  .851}99.96  & 63.27  & 46.13  & 38.42  & 68.44  & 50.68  & 42.04  \\
      & \textbf{MDCS-SAAET} & \textbf{98.88 } & \textbf{97.13 } & \textbf{96.20 } & \cellcolor[rgb]{ .851,  .851,  .851}100.0  & \cellcolor[rgb]{ .851,  .851,  .851}100.0  & \cellcolor[rgb]{ .851,  .851,  .851}99.98  & \textbf{65.98 } & \textbf{49.71 } & \textbf{42.02 } & \textbf{71.01 } & \textbf{54.20 } & \textbf{45.92 } \\
    \midrule
    \multirow{7}[2]{*}{CLIP$_{\text{ViT}}$} & Co-Attack & 20.18  & 9.54  & 7.12  & 21.29  & 9.51  & 6.78  & \cellcolor[rgb]{ .851,  .851,  .851}87.50  & \cellcolor[rgb]{ .851,  .851,  .851}77.95  & \cellcolor[rgb]{ .851,  .851,  .851}73.40  & 38.49  & 23.19  & 17.87  \\
      & SGA & 34.89  & 18.07  & 13.69  & 35.83  & 18.79  & 14.11  & \cellcolor[rgb]{ .851,  .851,  .851}100.0  & \cellcolor[rgb]{ .851,  .851,  .851}100.0  & \cellcolor[rgb]{ .851,  .851,  .851}100.0  & 60.38  & 42.58  & 35.19  \\
      & \textbf{MDCS-SGA} & 40.69  & 21.90  & 16.28  & 39.81  & 22.45  & 16.02  & \cellcolor[rgb]{ .851,  .851,  .851}100.0  & \cellcolor[rgb]{ .851,  .851,  .851}100.0  & \cellcolor[rgb]{ .851,  .851,  .851}100.0  & 66.00  & 48.20  & 39.71  \\
      & DRA & 43.29  & 24.63  & 18.82  & 44.83  & 25.76  & 19.39  & \cellcolor[rgb]{ .851,  .851,  .851}100.0  & \cellcolor[rgb]{ .851,  .851,  .851}100.0  & \cellcolor[rgb]{ .851,  .851,  .851}99.96  & 69.47  & 53.88  & 45.06  \\
      & \textbf{MDCS-DRA} & 48.15  & 28.63  & 21.69  & 48.40  & 28.65  & 22.24  & \cellcolor[rgb]{ .851,  .851,  .851}100.0  & \cellcolor[rgb]{ .851,  .851,  .851}100.0  & \cellcolor[rgb]{ .851,  .851,  .851}100.0  & 73.41  & 57.79  & 49.24  \\
      & SA-AET & 50.28  & 32.08  & 25.76  & 51.36  & 32.77  & 26.18  & \cellcolor[rgb]{ .851,  .851,  .851}100.0  & \cellcolor[rgb]{ .851,  .851,  .851}99.98  & \cellcolor[rgb]{ .851,  .851,  .851}99.98  & 74.00  & 59.22  & 51.62  \\
      & \textbf{MDCS-SAAET} & \textbf{54.86 } & \textbf{36.89 } & \textbf{29.32 } & \textbf{55.45 } & \textbf{36.49 } & \textbf{29.16 } & \cellcolor[rgb]{ .851,  .851,  .851}100.0  & \cellcolor[rgb]{ .851,  .851,  .851}99.98  & \cellcolor[rgb]{ .851,  .851,  .851}99.96  & \textbf{77.53 } & \textbf{63.15 } & \textbf{55.61 } \\
    \midrule
    \multirow{7}[2]{*}{CLIP$_{\text{CNN}}$} & Co-Attack & 23.62  & 11.40  & 8.21  & 26.05  & 12.69  & 8.79  & 40.62  & 24.71  & 18.82  & \cellcolor[rgb]{ .851,  .851,  .851}96.50  & \cellcolor[rgb]{ .851,  .851,  .851}92.75  & \cellcolor[rgb]{ .851,  .851,  .851}90.35  \\
      & SGA & 28.60  & 14.99  & 10.92  & 32.26  & 17.32  & 12.67  & 51.16  & 33.45  & 25.56  & \cellcolor[rgb]{ .851,  .851,  .851}100.0  & \cellcolor[rgb]{ .851,  .851,  .851}99.90  & \cellcolor[rgb]{ .851,  .851,  .851}99.73  \\
      & \textbf{MDCS-SGA} & 33.44  & 17.86  & 12.90  & 36.48  & 19.02  & 14.09  & 56.22  & 38.64  & 30.14  & \cellcolor[rgb]{ .851,  .851,  .851}100.0  & \cellcolor[rgb]{ .851,  .851,  .851}99.95  & \cellcolor[rgb]{ .851,  .851,  .851}99.86  \\
      & DRA & 33.96  & 18.50  & 13.49  & 37.45  & 20.74  & 15.13  & 59.02  & 40.55  & 32.75  & \cellcolor[rgb]{ .851,  .851,  .851}99.93  & \cellcolor[rgb]{ .851,  .851,  .851}99.59  & \cellcolor[rgb]{ .851,  .851,  .851}99.28  \\
      & \textbf{MDCS-DRA} & 39.43  & 22.29  & 16.60  & 41.05  & 23.53  & 17.83  & 63.89  & 45.60  & 36.89  & \cellcolor[rgb]{ .851,  .851,  .851}100.0  & \cellcolor[rgb]{ .851,  .851,  .851}99.93  & \cellcolor[rgb]{ .851,  .851,  .851}99.82  \\
      & SA-AET & 37.89  & 21.68  & 16.52  & 41.69  & 24.33  & 18.14  & 63.14  & 44.45  & 35.52  & \cellcolor[rgb]{ .851,  .851,  .851}99.97  & \cellcolor[rgb]{ .851,  .851,  .851}99.78  & \cellcolor[rgb]{ .851,  .851,  .851}99.62  \\
      & \textbf{MDCS-SAAET} & \textbf{45.02 } & \textbf{27.71 } & \textbf{21.17 } & \textbf{47.57 } & \textbf{30.10 } & \textbf{22.68 } & \textbf{69.56 } & \textbf{52.18 } & \textbf{43.76 } & \cellcolor[rgb]{ .851,  .851,  .851}99.97  & \cellcolor[rgb]{ .851,  .851,  .851}99.88  & \cellcolor[rgb]{ .851,  .851,  .851}99.82  \\
    \bottomrule
    \end{tabular}
    }
\end{table*}

\begin{table*}[htbp]\small
\captionsetup{font={normalsize}}
\caption{\normalsize{Detailed comparison of attack success rates on MSCOCO dataset. We report the attack success rates (\%) for both Image-Text Retrieval and Text-Image Retrieval tasks using R@1, R@5, and R@10 metrics. The adversarial perturbations are constrained by an $L_{\infty}$ norm of 8/255, generated over 10 iterations with a step size of 2/255.}}
\label{tab:addition-vqa2}
\centering
\resizebox{.88\textwidth}{!}{
    \begin{tabular}{c|c|ccc|ccc|ccc|ccc}
    \toprule
    \multicolumn{14}{c}{\textbf{MSCOCO (Image-Text Retrieval)}} \\
    \midrule
    \multirow{2}[4]{*}{Source} & \multirow{2}[4]{*}{Attack} & \multicolumn{3}{c}{ALBEF} & \multicolumn{3}{c}{TCL} & \multicolumn{3}{c}{CLIP$_{\text{ViT}}$} & \multicolumn{3}{c}{CLIP$_{\text{CNN}}$} \\
\cmidrule{3-14}      &   & R@1 & R@5 & R@10 & R@1 & R@5 & R@10 & R@1 & R@5 & R@10 & R@1 & R@5 & R@10 \\
    \midrule
    \multirow{7}[2]{*}{ALBEF} & Co-Attack & \cellcolor[rgb]{ .851,  .851,  .851}96.65  & \cellcolor[rgb]{ .851,  .851,  .851}94.74  & \cellcolor[rgb]{ .851,  .851,  .851}93.12  & 57.33  & 37.24  & 28.53  & 50.71  & 33.10  & 26.44  & 52.06  & 33.89  & 27.47  \\
      & SGA & \cellcolor[rgb]{ .851,  .851,  .851}99.95  & \cellcolor[rgb]{ .851,  .851,  .851}99.70  & \cellcolor[rgb]{ .851,  .851,  .851}99.65  & 87.22  & 76.39  & 69.34  & 64.06  & 46.76  & 39.02  & 63.34  & 47.67  & 39.72  \\
      & \textbf{MDCS-SGA} & \cellcolor[rgb]{ .851,  .851,  .851}100.0  & \cellcolor[rgb]{ .851,  .851,  .851}99.96  & \cellcolor[rgb]{ .851,  .851,  .851}99.84  & 88.23  & 77.94  & 70.73  & 64.94  & 49.08  & 40.44  & 66.12  & 49.63  & 41.83  \\
      & DRA & \cellcolor[rgb]{ .851,  .851,  .851}99.92  & \cellcolor[rgb]{ .851,  .851,  .851}99.68  & \cellcolor[rgb]{ .851,  .851,  .851}99.44  & 88.78  & 78.81  & 72.24  & 69.21  & 52.56  & 43.97  & 69.06  & 52.13  & 43.87  \\
      & \textbf{MDCS-DRA} & \cellcolor[rgb]{ .851,  .851,  .851}100.0  & \cellcolor[rgb]{ .851,  .851,  .851}99.85  & \cellcolor[rgb]{ .851,  .851,  .851}99.77  & 90.29  & 80.75  & 74.00  & 71.77  & 54.26  & 45.61  & 71.19  & 55.04  & 46.49  \\
      & SA-AET & \cellcolor[rgb]{ .851,  .851,  .851}100.0  & \cellcolor[rgb]{ .851,  .851,  .851}99.96  & \cellcolor[rgb]{ .851,  .851,  .851}99.96  & 96.98  & 93.45  & 90.11  & 76.61  & 61.51  & 53.76  & 75.32  & 61.03  & 53.19  \\
      & \textbf{MDCS-SAAET} & \cellcolor[rgb]{ .851,  .851,  .851}100.0  & \cellcolor[rgb]{ .851,  .851,  .851}99.96  & \cellcolor[rgb]{ .851,  .851,  .851}99.94  & \textbf{97.54 } & \textbf{94.03 } & \textbf{91.04 } & \textbf{79.28 } & \textbf{65.25 } & \textbf{57.33 } & \textbf{78.05 } & \textbf{64.16 } & \textbf{56.44 } \\
    \midrule
    \multirow{7}[2]{*}{TCL} & Co-Attack & 65.22  & 45.39  & 36.43  & \cellcolor[rgb]{ .851,  .851,  .851}94.95  & \cellcolor[rgb]{ .851,  .851,  .851}91.18  & \cellcolor[rgb]{ .851,  .851,  .851}89.08  & 55.28  & 38.04  & 29.73  & 56.68  & 37.31  & 29.82  \\
      & SGA & 92.78  & 86.95  & 83.68  & \cellcolor[rgb]{ .851,  .851,  .851}100.0  & \cellcolor[rgb]{ .851,  .851,  .851}99.98  & \cellcolor[rgb]{ .851,  .851,  .851}99.96  & 58.68  & 44.11  & 36.70  & 61.05  & 45.91  & 38.36  \\
      & \textbf{MDCS-SGA} & 93.50  & 88.63  & 85.36  & \cellcolor[rgb]{ .851,  .851,  .851}100.0  & \cellcolor[rgb]{ .851,  .851,  .851}100.0  & \cellcolor[rgb]{ .851,  .851,  .851}100.0  & 62.15  & 46.39  & 39.56  & 62.44  & 48.68  & 40.94  \\
      & DRA & 94.61  & 90.49  & 87.90  & \cellcolor[rgb]{ .851,  .851,  .851}100.0  & \cellcolor[rgb]{ .851,  .851,  .851}100.0  & \cellcolor[rgb]{ .851,  .851,  .851}99.96  & 71.08  & 54.81  & 46.53  & 70.29  & 54.94  & 45.89  \\
      & \textbf{MDCS-DRA} & 95.64  & 91.81  & 89.05  & \cellcolor[rgb]{ .851,  .851,  .851}100.0  & \cellcolor[rgb]{ .851,  .851,  .851}100.0  & \cellcolor[rgb]{ .851,  .851,  .851}99.98  & 72.30  & 56.36  & 48.76  & 72.82  & 57.71  & 48.58  \\
      & SA-AET & 97.96  & 95.78  & 93.95  & \cellcolor[rgb]{ .851,  .851,  .851}100.0  & \cellcolor[rgb]{ .851,  .851,  .851}99.98  & \cellcolor[rgb]{ .851,  .851,  .851}99.96  & 76.00  & 61.54  & 54.63  & 75.64  & 61.00  & 53.51  \\
      & \textbf{MDCS-SAAET} & \textbf{98.12 } & \textbf{95.97 } & \textbf{94.55 } & \cellcolor[rgb]{ .851,  .851,  .851}100.0  & \cellcolor[rgb]{ .851,  .851,  .851}99.96  & \cellcolor[rgb]{ .851,  .851,  .851}99.90  & \textbf{78.33 } & \textbf{64.84 } & \textbf{57.29 } & \textbf{79.12 } & \textbf{65.79 } & \textbf{56.86 } \\
    \midrule
    \multirow{7}[2]{*}{CLIP$_{\text{ViT}}$} & Co-Attack & 26.35  & 11.97  & 7.20  & 28.23  & 12.89  & 8.19  & \cellcolor[rgb]{ .851,  .851,  .851}88.78  & \cellcolor[rgb]{ .851,  .851,  .851}78.21  & \cellcolor[rgb]{ .851,  .851,  .851}70.98  & 47.36  & 31.49  & 25.29  \\
      & SGA & 43.88  & 25.53  & 17.99  & 43.73  & 25.44  & 18.23  & \cellcolor[rgb]{ .851,  .851,  .851}100.0  & \cellcolor[rgb]{ .851,  .851,  .851}100.0  & \cellcolor[rgb]{ .851,  .851,  .851}100.0  & 71.60  & 56.46  & 48.63  \\
      & \textbf{MDCS-SGA} & 48.47  & 28.96  & 20.96  & 47.75  & 28.29  & 20.71  & \cellcolor[rgb]{ .851,  .851,  .851}100.0  & \cellcolor[rgb]{ .851,  .851,  .851}100.0  & \cellcolor[rgb]{ .851,  .851,  .851}100.0  & 74.09  & 59.94  & 52.22  \\
      & DRA & 52.08  & 32.42  & 23.88  & 51.67  & 31.65  & 23.73  & \cellcolor[rgb]{ .851,  .851,  .851}100.0  & \cellcolor[rgb]{ .851,  .851,  .851}99.95  & \cellcolor[rgb]{ .851,  .851,  .851}99.95  & 80.79  & 67.72  & 60.64  \\
      & \textbf{MDCS-DRA} & 56.30  & 36.41  & 28.00  & 55.32  & 34.58  & 26.21  & \cellcolor[rgb]{ .851,  .851,  .851}100.0  & \cellcolor[rgb]{ .851,  .851,  .851}99.97  & \cellcolor[rgb]{ .851,  .851,  .851}99.98  & 82.63  & 71.04  & 63.75  \\
      & SA-AET & 57.90  & 38.93  & 29.81  & 57.41  & 38.14  & 30.06  & \cellcolor[rgb]{ .851,  .851,  .851}99.96  & \cellcolor[rgb]{ .851,  .851,  .851}99.97  & \cellcolor[rgb]{ .851,  .851,  .851}99.93  & 84.51  & 73.48  & 66.63  \\
      & \textbf{MDCS-SAAET} & \textbf{62.80 } & \textbf{44.26 } & \textbf{34.89 } & \textbf{62.43 } & \textbf{42.56 } & \textbf{34.53 } & \cellcolor[rgb]{ .851,  .851,  .851}100.0  & \cellcolor[rgb]{ .851,  .851,  .851}99.95  & \cellcolor[rgb]{ .851,  .851,  .851}99.93  & \textbf{85.70 } & \textbf{76.12 } & \textbf{70.44 } \\
    \midrule
    \multirow{7}[2]{*}{CLIP$_{\text{CNN}}$} & Co-Attack & 29.49  & 13.26  & 8.28  & 31.83  & 15.11  & 9.81  & 53.15  & 36.11  & 28.78  & \cellcolor[rgb]{ .851,  .851,  .851}97.79  & \cellcolor[rgb]{ .851,  .851,  .851}94.29  & \cellcolor[rgb]{ .851,  .851,  .851}92.26  \\
      & SGA & 36.58  & 19.14  & 11.90  & 38.47  & 20.38  & 14.19  & 62.69  & 47.46  & 38.64  & \cellcolor[rgb]{ .851,  .851,  .851}99.96  & \cellcolor[rgb]{ .851,  .851,  .851}99.84  & \cellcolor[rgb]{ .851,  .851,  .851}99.68  \\
      & \textbf{MDCS-SGA} & 40.53  & 21.90  & 14.64  & 43.28  & 24.11  & 17.05  & 66.58  & 52.15  & 44.23  & \cellcolor[rgb]{ .851,  .851,  .851}99.96  & \cellcolor[rgb]{ .851,  .851,  .851}99.97  & \cellcolor[rgb]{ .851,  .851,  .851}99.95  \\
      & DRA & 41.09  & 21.58  & 14.41  & 43.25  & 23.59  & 16.36  & 70.89  & 55.21  & 46.46  & \cellcolor[rgb]{ .851,  .851,  .851}99.75  & \cellcolor[rgb]{ .851,  .851,  .851}99.70  & \cellcolor[rgb]{ .851,  .851,  .851}99.32  \\
      & \textbf{MDCS-DRA} & 45.60  & 25.67  & 18.05  & 48.17  & 27.71  & 20.31  & 75.70  & 61.02  & 52.57  & \cellcolor[rgb]{ .851,  .851,  .851}99.96  & \cellcolor[rgb]{ .851,  .851,  .851}99.92  & \cellcolor[rgb]{ .851,  .851,  .851}99.78  \\
      & SA-AET & 43.57  & 24.85  & 17.35  & 47.04  & 26.65  & 19.24  & 72.61  & 57.95  & 50.20  & \cellcolor[rgb]{ .851,  .851,  .851}99.92  & \cellcolor[rgb]{ .851,  .851,  .851}99.84  & \cellcolor[rgb]{ .851,  .851,  .851}99.76  \\
      & \textbf{MDCS-SAAET} & \textbf{50.71 } & \textbf{31.17 } & \textbf{22.87 } & \textbf{53.04 } & \textbf{33.03 } & \textbf{24.24 } & \textbf{79.13 } & \textbf{66.90 } & \textbf{59.04 } & \cellcolor[rgb]{ .851,  .851,  .851}100.0  & \cellcolor[rgb]{ .851,  .851,  .851}99.95  & \cellcolor[rgb]{ .851,  .851,  .851}99.90  \\
    \midrule
    \multicolumn{14}{c}{\textbf{MSCOCO (Text-Image Retrieval)}} \\
    \midrule
    \multirow{2}[4]{*}{Source} & \multirow{2}[4]{*}{Attack} & \multicolumn{3}{c}{ALBEF} & \multicolumn{3}{c}{TCL} & \multicolumn{3}{c}{CLIP$_{\text{ViT}}$} & \multicolumn{3}{c}{CLIP$_{\text{CNN}}$} \\
\cmidrule{3-14}      &   & R@1 & R@5 & R@10 & R@1 & R@5 & R@10 & R@1 & R@5 & R@10 & R@1 & R@5 & R@10 \\
    \midrule
    \multirow{7}[2]{*}{ALBEF} & Co-Attack & \cellcolor[rgb]{ .851,  .851,  .851}98.33  & \cellcolor[rgb]{ .851,  .851,  .851}96.60  & \cellcolor[rgb]{ .851,  .851,  .851}95.30  & 64.19  & 46.17  & 37.83  & 57.36  & 42.19  & 35.53  & 60.74  & 45.90  & 38.77  \\
      & SGA & \cellcolor[rgb]{ .851,  .851,  .851}99.95  & \cellcolor[rgb]{ .851,  .851,  .851}99.81  & \cellcolor[rgb]{ .851,  .851,  .851}99.73  & 87.96  & 76.98  & 70.35  & 70.01  & 55.43  & 48.06  & 70.95  & 56.31  & 49.00  \\
      & \textbf{MDCS-SGA} & \cellcolor[rgb]{ .851,  .851,  .851}99.99  & \cellcolor[rgb]{ .851,  .851,  .851}99.95  & \cellcolor[rgb]{ .851,  .851,  .851}99.92  & 88.28  & 77.93  & 71.47  & 70.88  & 56.85  & 49.08  & 72.08  & 57.69  & 50.76  \\
      & DRA & \cellcolor[rgb]{ .851,  .851,  .851}99.96  & \cellcolor[rgb]{ .851,  .851,  .851}99.79  & \cellcolor[rgb]{ .851,  .851,  .851}99.66  & 90.01  & 80.25  & 73.80  & 74.89  & 61.34  & 53.78  & 75.11  & 61.74  & 54.39  \\
      & \textbf{MDCS-DRA} & \cellcolor[rgb]{ .851,  .851,  .851}99.97  & \cellcolor[rgb]{ .851,  .851,  .851}99.89  & \cellcolor[rgb]{ .851,  .851,  .851}99.82  & 90.92  & 81.56  & 75.38  & 76.28  & 62.76  & 55.42  & 77.42  & 63.53  & 56.37  \\
      & SA-AET & \cellcolor[rgb]{ .851,  .851,  .851}99.99  & \cellcolor[rgb]{ .851,  .851,  .851}99.96  & \cellcolor[rgb]{ .851,  .851,  .851}99.92  & 96.87  & 93.16  & 90.34  & 79.99  & 68.16  & 61.23  & 80.86  & 68.58  & 61.39  \\
      & \textbf{MDCS-SAAET} & \cellcolor[rgb]{ .851,  .851,  .851}99.99  & \cellcolor[rgb]{ .851,  .851,  .851}99.96  & \cellcolor[rgb]{ .851,  .851,  .851}99.93  & \textbf{97.25 } & \textbf{94.02 } & \textbf{91.46 } & \textbf{81.91 } & \textbf{70.70 } & \textbf{63.88 } & \textbf{82.51 } & \textbf{70.60 } & \textbf{63.92 } \\
    \midrule
    \multirow{7}[2]{*}{TCL} & Co-Attack & 72.41  & 56.37  & 48.16  & \cellcolor[rgb]{ .851,  .851,  .851}97.87  & \cellcolor[rgb]{ .851,  .851,  .851}95.32  & \cellcolor[rgb]{ .851,  .851,  .851}93.42  & 62.33  & 46.90  & 39.68  & 66.45  & 49.95  & 42.72  \\
      & SGA & 93.07  & 87.82  & 84.17  & \cellcolor[rgb]{ .851,  .851,  .851}100.0  & \cellcolor[rgb]{ .851,  .851,  .851}100.0  & \cellcolor[rgb]{ .851,  .851,  .851}100.0  & 64.89  & 50.93  & 43.85  & 67.46  & 53.56  & 46.29  \\
      & \textbf{MDCS-SGA} & 94.29  & 89.04  & 85.66  & \cellcolor[rgb]{ .851,  .851,  .851}100.0  & \cellcolor[rgb]{ .851,  .851,  .851}100.0  & \cellcolor[rgb]{ .851,  .851,  .851}100.0  & 67.56  & 52.76  & 45.77  & 69.72  & 56.45  & 49.05  \\
      & DRA & 95.74  & 91.87  & 89.31  & \cellcolor[rgb]{ .851,  .851,  .851}99.99  & \cellcolor[rgb]{ .851,  .851,  .851}99.99  & \cellcolor[rgb]{ .851,  .851,  .851}99.98  & 75.14  & 61.13  & 53.92  & 76.87  & 63.69  & 56.22  \\
      & \textbf{MDCS-DRA} & 96.49  & 92.67  & 90.14  & \cellcolor[rgb]{ .851,  .851,  .851}100.0  & \cellcolor[rgb]{ .851,  .851,  .851}99.99  & \cellcolor[rgb]{ .851,  .851,  .851}99.99  & 76.68  & 63.53  & 56.24  & 78.68  & 65.78  & 58.75  \\
      & SA-AET & 98.06  & 95.93  & 94.57  & \cellcolor[rgb]{ .851,  .851,  .851}99.99  & \cellcolor[rgb]{ .851,  .851,  .851}99.95  & \cellcolor[rgb]{ .851,  .851,  .851}99.94  & 79.96  & 67.85  & 61.25  & 81.09  & 69.43  & 62.51  \\
      & \textbf{MDCS-SAAET} & \textbf{98.17 } & \textbf{96.16 } & \textbf{94.94 } & \cellcolor[rgb]{ .851,  .851,  .851}99.99  & \cellcolor[rgb]{ .851,  .851,  .851}99.97  & \cellcolor[rgb]{ .851,  .851,  .851}99.93  & \textbf{81.55 } & \textbf{70.18 } & \textbf{63.87 } & \textbf{83.07 } & \textbf{71.90 } & \textbf{65.28 } \\
    \midrule
    \multirow{7}[2]{*}{CLIP$_{\text{ViT}}$} & Co-Attack & 36.69  & 22.86  & 17.71  & 38.42  & 23.51  & 17.88  & \cellcolor[rgb]{ .851,  .851,  .851}96.72  & \cellcolor[rgb]{ .851,  .851,  .851}91.28  & \cellcolor[rgb]{ .851,  .851,  .851}85.46  & 58.45  & 43.78  & 36.77  \\
      & SGA & 51.04  & 33.62  & 27.06  & 50.83  & 34.42  & 27.92  & \cellcolor[rgb]{ .851,  .851,  .851}100.0  & \cellcolor[rgb]{ .851,  .851,  .851}100.0  & \cellcolor[rgb]{ .851,  .851,  .851}100.0  & 75.59  & 63.21  & 56.28  \\
      & \textbf{MDCS-SGA} & 54.76  & 36.70  & 29.42  & 53.25  & 36.42  & 29.66  & \cellcolor[rgb]{ .851,  .851,  .851}100.0  & \cellcolor[rgb]{ .851,  .851,  .851}100.0  & \cellcolor[rgb]{ .851,  .851,  .851}100.0  & 78.42  & 66.56  & 59.98  \\
      & DRA & 61.51  & 43.47  & 36.22  & 60.81  & 43.89  & 36.16  & \cellcolor[rgb]{ .851,  .851,  .851}100.0  & \cellcolor[rgb]{ .851,  .851,  .851}100.0  & \cellcolor[rgb]{ .851,  .851,  .851}99.99  & 84.61  & 74.06  & 67.92  \\
      & \textbf{MDCS-DRA} & 64.52  & 46.76  & 38.89  & 62.95  & 46.17  & 38.31  & \cellcolor[rgb]{ .851,  .851,  .851}100.0  & \cellcolor[rgb]{ .851,  .851,  .851}100.0  & \cellcolor[rgb]{ .851,  .851,  .851}99.99  & 86.00  & 77.26  & 71.28  \\
      & SA-AET & 66.21  & 49.54  & 42.24  & 65.50  & 49.36  & 41.87  & \cellcolor[rgb]{ .851,  .851,  .851}99.99  & \cellcolor[rgb]{ .851,  .851,  .851}99.97  & \cellcolor[rgb]{ .851,  .851,  .851}99.95  & 87.32  & 77.99  & 72.27  \\
      & \textbf{MDCS-SAAET} & \textbf{70.09 } & \textbf{53.98 } & \textbf{46.53 } & \textbf{68.75 } & \textbf{52.97 } & \textbf{45.26 } & \cellcolor[rgb]{ .851,  .851,  .851}100.0  & \cellcolor[rgb]{ .851,  .851,  .851}99.98  & \cellcolor[rgb]{ .851,  .851,  .851}99.98  & \textbf{89.18 } & \textbf{80.13 } & \textbf{74.96 } \\
    \midrule
    \multirow{7}[2]{*}{CLIP$_{\text{CNN}}$} & Co-Attack & 41.50  & 26.14  & 20.51  & 43.44  & 27.92  & 21.61  & 60.15  & 45.53  & 38.56  & \cellcolor[rgb]{ .851,  .851,  .851}98.54  & \cellcolor[rgb]{ .851,  .851,  .851}96.16  & \cellcolor[rgb]{ .851,  .851,  .851}94.71  \\
      & SGA & 46.29  & 30.00  & 23.70  & 48.75  & 32.86  & 26.01  & 67.89  & 53.23  & 46.93  & \cellcolor[rgb]{ .851,  .851,  .851}99.95  & \cellcolor[rgb]{ .851,  .851,  .851}99.85  & \cellcolor[rgb]{ .851,  .851,  .851}99.76  \\
      & \textbf{MDCS-SGA} & 50.04  & 32.89  & 26.19  & 51.72  & 35.26  & 27.89  & 72.24  & 58.48  & 51.84  & \cellcolor[rgb]{ .851,  .851,  .851}99.97  & \cellcolor[rgb]{ .851,  .851,  .851}99.96  & \cellcolor[rgb]{ .851,  .851,  .851}99.94  \\
      & DRA & 52.32  & 35.71  & 28.85  & 54.26  & 37.89  & 30.60  & 74.83  & 62.53  & 55.45  & \cellcolor[rgb]{ .851,  .851,  .851}99.93  & \cellcolor[rgb]{ .851,  .851,  .851}99.76  & \cellcolor[rgb]{ .851,  .851,  .851}99.64  \\
      & \textbf{MDCS-DRA} & 56.64  & 39.42  & 32.21  & 57.84  & 41.30  & 33.90  & 78.85  & 66.94  & 60.37  & \cellcolor[rgb]{ .851,  .851,  .851}99.97  & \cellcolor[rgb]{ .851,  .851,  .851}99.91  & \cellcolor[rgb]{ .851,  .851,  .851}99.84  \\
      & SA-AET & 54.86  & 38.13  & 31.21  & 57.84  & 40.16  & 32.96  & 76.95  & 64.59  & 57.86  & \cellcolor[rgb]{ .851,  .851,  .851}99.91  & \cellcolor[rgb]{ .851,  .851,  .851}99.81  & \cellcolor[rgb]{ .851,  .851,  .851}99.65  \\
      & \textbf{MDCS-SAAET} & \textbf{60.76 } & \textbf{44.01 } & \textbf{36.78 } & \textbf{62.13 } & \textbf{45.37 } & \textbf{37.92 } & \textbf{82.75 } & \textbf{71.77 } & \textbf{65.13 } & \cellcolor[rgb]{ .851,  .851,  .851}99.97  & \cellcolor[rgb]{ .851,  .851,  .851}99.91  & \cellcolor[rgb]{ .851,  .851,  .851}99.80  \\
    \bottomrule
    \end{tabular}%
    }
\end{table*}

% WARNING: do not forget to delete the supplementary pages from your submission 
% \input{sec/X_suppl}

\end{document}